\newcommand{\ignore}[1]{}
\definecolor{darkgreen}{rgb}{0,0.5,0}
\definecolor{darkred}{rgb}{0.7,0,0}
\newcommand{\maximize}{\mathop{\rm maximize}}
\newcommand{\real}{\mathbb{R}}
\newcommand{\angles}[1]{\left\langle #1 \right\rangle}
\newcommand{\abs}[1]{\left| #1 \right|}
\newtheorem{theorem}{Theorem}
\newtheorem{definition}{Definition}
\newtheorem{lemma}{Lemma}
\newtheorem{corollary}{Corollary}
\renewcommand{\qed}{\hfill $\framebox(6,6){}$}
\newcommand{\q}{{\vec{q}}}
\renewcommand{\r}{{\vec{r}}}
\newcommand{\p}{{\vec{p}}}
\renewcommand{\u}{{\vec{u}}}
\newcommand{\e}{{\textrm{e}}}
\newcommand{\loss}{\ell}
\newcommand{\Loss}{{L}}
\newcommand{\alg}{{\mathcal{A}}}
\newcommand{\risk}{\rho}
\newcommand{\entro}{{\textrm{H}}}
\newcommand{\Reg}{{\mathcal{R}}}
\newcommand{\simp}{{\Delta}}
\newcommand{\squishlist}{
   \begin{list}{$\bullet$}
    { \setlength{\itemsep}{0pt}      \setlength{\parsep}{3pt}
      \setlength{\topsep}{3pt}       \setlength{\partopsep}{0pt}
      \setlength{\leftmargin}{1.5em} \setlength{\labelwidth}{1em}
      \setlength{\labelsep}{0.5em} } }
\newcommand{\squishend}{  \end{list}  }
\title{A New Understanding of Prediction Markets \\
Via No-Regret Learning}
\author{
\alignauthor
Yiling Chen\\
\affaddr{School of Engineering and Applied Sciences}\\
\affaddr{Harvard University}\\
\affaddr{Cambridge, MA 02138}\\
\email{yiling@eecs.harvard.edu}
\alignauthor
Jennifer Wortman Vaughan \\
\affaddr{School of Engineering and Applied Sciences}\\
\affaddr{Harvard University}\\
\affaddr{Cambridge, MA 02138}\\
\email{jenn@seas.harvard.edu}
}
\begin{document}

\maketitle

\abstract{We explore the striking mathematical connections that exist between market scoring rules, cost function based prediction markets, and no-regret learning.  We show that any cost function based prediction market can be interpreted as an algorithm for the commonly studied problem of learning from expert advice by equating trades made in the market with losses observed by the learning algorithm.  If the loss of the market organizer is bounded, this bound can be used to derive an $O(\sqrt{T})$ regret bound for the corresponding learning algorithm.  We then show that the class of markets with convex cost functions exactly corresponds to the class of Follow the Regularized Leader learning algorithms, with the choice of a cost function in the market corresponding to the choice of a regularizer in the learning problem.  Finally, we show an equivalence between market scoring rules and prediction markets with convex cost functions.  This implies that market scoring rules can also be interpreted naturally as Follow the Regularized Leader algorithms, and may be of independent interest.  These connections provide new insight into how it is that commonly studied markets, such as the Logarithmic Market Scoring Rule, can aggregate opinions into accurate estimates of the likelihood of future events.}








\section{Introduction}

Imagine you are interested in learning an accurate estimate of the probability that the United States unemployment rate for a particular month will fall below 10\%.  You could choose to spend hours digging through news articles, reading financial reports, and weighing various opinions against each other, eventually coming up with a reasonably informed estimate.  However, you could potentially save yourself a lot of hassle (and obtain a better estimate!) by appealing to the wisdom of crowds.

A \emph{prediction market} is a financial market designed for information aggregation.  
%
%
For example, in a cost function based prediction market~\cite{CP07}, the organizer (or \emph{market maker}) trades a set of securities corresponding to each potential outcome of an event.  The market maker might offer a security that pays \$1 if and only if the United States unemployment rate for January 2010 is above 10\%.  A risk neutral trader who believes that the true probability that the unemployment rate will be above 10\% is $p$ should be willing to buy a share of this security at any price below $\$p$.  Similarly, he should be willing to sell a share of this security at any price above $\$p$.  For this reason, the current market price of this security can be viewed as the population's collective estimate of how likely it is that the unemployment rate will be above 10\%.

These estimates have proved quite accurate in practice in a wide variety of domains.  (See \citet{LHI09} for an impressive assortment of examples.)  The theory of rational expectations equilibria offers some insight into why prediction markets in general should converge to accurate prices, but is plagued by strong assumptions and no-trade theorems~\cite{PS07}.  Furthermore, this theory says nothing of why particular prediction market mechanisms, such as Hanson's increasingly popular Logarithmic Market Scoring Rule (LMSR) ~\cite{H03,H07}, might produce more accurate estimates than others in practice.  In this work, we aim to provide additional insight into the learning power of particular market mechanisms by highlighting the deep mathematical connections between prediction markets and no-regret learning.

It should come as no surprise that there is a connection between prediction markets and learning.  The theories of markets and learning are built upon many of the same fundamental concepts, such as proper scoring rules (called proper losses in the learning community) and Bregman divergences.  To our knowledge, \citet{CFLPW08} were the first to formally demonstrate a connection, showing that the standard Randomized Weighted Majority regret bound~\cite{FS97} can be used as a starting point to rederive the well-known bound on the worst-case loss of a LMSR marker maker.  (They went on to show that PermELearn, an extension of Weighted Majority to permutation learning~\cite{HW09}, can be used to efficiently run LMSR over combinatorial outcome spaces for betting on rankings.)  As we show in Section~\ref{sec:connection}, the converse is also true; the Weighted Majority regret bound can be derived directly from the bound on the worst-case loss of a market maker using LMSR.  However, the connection goes much deeper.

In Section~\ref{sec:connection}, we show how \emph{any} cost function based prediction market with bounded loss can be interpreted as a no-regret learning algorithm.  Furthermore, if the loss of the market maker is bounded, this bound can be used to derive an $O(\sqrt{T})$ regret bound for the corresponding learning algorithm. The key ides is to view the \emph{trades} made in the market as \emph{losses} observed by the learning algorithm.  We can then think of the market maker as learning a probability distribution over outcomes by treating each observed trade as a training instance.

In Section~\ref{sec:connections}, we go on to show that the class of \emph{convex} cost function based markets exactly corresponds to the class of Follow the Regularized Leader learning algorithms~\cite{SS07,HK08,H09} in which weights are chosen at each time step to minimize a combination of empirical loss and a convex regularization term.  This allows us to interpret the selection of a cost function for the market as the selection of a regularizer for the learning problem.  Furthermore, we prove an equivalence between another common class of prediction markets, \emph{market scoring rules}, and convex cost function based markets,\footnote{A similar but weaker correspondence between market scoring rules and cost function based markets was discussed in \citet{CP07} and \citet{ADPWY09}.} which immediately implies that market scoring rules can be interpreted as Follow the Regularized Leader algorithms too.  These connections provide insight into why it is that prediction markets tend to yield such accurate estimates in practice.

Before describing our results in more detail, we review the relevant concepts and results from the literature on prediction markets and no-regret learning in Sections~\ref{sec:predmarkets} and~\ref{sec:experts}.

\section{Prediction Markets}
\label{sec:predmarkets}

In recent years, a variety of compelling prediction market mechanisms have been proposed and studied, including standard call market mechanisms and Pennock's dynamic parimutuel markets~\cite{P04}.  In this work we focus on two broad classes of mechanisms: Hanson's market scoring rules~\cite{H03,H07} and cost function based prediction markets as described in \citet{CP07}.  We also briefly discuss the related class of Sequential Convex Parimutuel Mechanisms~\cite{ADPWY09} in Section~\ref{sec:scpm}.

\subsection{Market Scoring Rules}

\emph{Scoring rules} have long been used in the evaluation of probabilistic forecasts.  In the context of prediction markets and elicitation, scoring rules are used to encourage individuals to make careful assessments and truthfully report their beliefs~\cite{Savage:71,GKO05,LPS08}.  In the context of machine learning, scoring rules are used as loss functions to evaluate and compare the performance of different algorithms~\cite{BSS05,RW09}.

Formally, let $\{1,\cdots,N\}$ be a set of mutually exclusive and exhaustive outcomes of a future event.  A scoring rule $\vec{s}$ maps a probability distribution $\p$ to a score $s_i(\p)$ for each outcome $i$, with $s_i(\p)$ taking values in the extended real line $[-\infty, \infty]$.  Intuitively, this score represents the reward of a forecaster might receive for predicting the distribution $\p$ if the outcome turns out to be $i$.  A scoring rule is said to be \emph{regular} relative to the probability simplex $\simp_N$ if $\sum_{i=1}^N p_i s_i(\p\,') \in [-\infty, \infty)$ for all $\p, \p\,' \in \simp_N$, with $\sum_{i=1}^N p_i s_i(\p) \in (-\infty, \infty)$.  This implies that $s_i(\vec{p})$ is finite whenever $p_i > 0$.  A scoring rule is said to be \emph{proper} if a risk-neutral forecaster who believes the true distribution over outcomes to be $\p$ has no incentive to report any alternate distribution $\p\,'$, that is, if $\sum_{i=1}^N p_i s_i(\p) \geq \sum_{i=1}^N p_i s_i(\p\,')$ for all distributions $\p\,'$.   The rule is \emph{strictly proper} if this inequality holds with equality only when $\p = \p\,'$.  

Two examples of regular, strictly proper scoring rules commonly used in both elicitation and in machine learning are the the quadratic scoring rule~\cite{B50}:
\begin{equation}
s_i(\p) = a_i + b\left(2p_i - \sum_{i=1}^N p_i^2 \right)
\label{eqn:qsr}
\end{equation}
and the logarithmic scoring rule~\cite{G52}:
\begin{equation}
s_i(\p) = a_i + b \log(p_i) 
\label{eqn:lsr}
\end{equation}
with arbitrary parameters $a_1, \cdots, a_N$ and parameter $b > 0$.
The uses and properties of scoring rules are too extensive to cover in detail here.  For a nice survey, see \citet{Gneiting:07}.

\emph{Market scoring rules} were developed by \citet{H03,H07} as a method of using scoring rules to pool opinions from many different forecasters.  Market scoring rules are sequentially shared scoring rules.  Formally, the market maintains a current probability distribution $\p$.  At any time, a trader can enter the market and change  this distribution to an arbitrary distribution $\p\,'$ of her choice.\footnote{While $\p\,'$ may be arbitrary, in some market scoring rules, such as the LMSR, distributions that place a weight of 0 on any outcome are not allowed because it requires the trader to pay infinite amount of money if the outcome with reported probability 0 actually happens.}  If the outcome turns out to be $i$, she receives a (possibly negative) payoff of $s_i(\p\,') - s_i(\p)$.  For example, in the popular Logarithmic Market Scoring Rule (LMSR), which is based on the logarithmic scoring rule in Equation~\ref{eqn:lsr}, a trader who changes the distribution from $\p$ to $\p\,'$ receives a payoff of $b \log (p'_i / p_i)$.

Since the trader has no control over $\p$, a myopic trader who believes the true distribution to be $\vec{r}$ maximizes her expected payoff by maximizing $\sum_i r_i s_i (\p\,')$.  Thus if $\vec{s}$ is a strictly proper scoring rule, traders have an incentive to change the market's distribution to match their true beliefs.  The idea is that if traders update their own beliefs over time based on market activity, the market's distribution should eventually converge to the collective beliefs of the population.

Each trader in a market scoring rule is essentially responsible for paying the previous trader's score.  Thus the market maker is responsible only for paying the score of the final trader.  Let $\p_0$ be the initial probability distribution of the market.  The worst case loss of the market maker is then
\[
\max_{i \in \{1,\cdots,N\}} \max_{\p \in \simp_N} \left(s_i(\p) - s_i(\p_0) \right) .
\]
The worst case loss of the market maker running an LMSR initialized to the uniform distribution is $b \log N$.

Note that the parameters $a_1, \cdots, a_N$ of the logarithmic scoring rule do not affect either the payoff of traders or the loss of the market maker in the LMSR.  For simplicity, in the remainder of this paper when discussing the LMSR we assume that $a_i = 0$ for all $i$.

\subsection{Cost Function Based Markets}

As before, let $\{1,\cdots,N\}$ be a set of mutually exclusive and exhaustive outcomes of an event.  In a cost function based market, a market maker offers a security corresponding to each outcome $i$. The security associated with outcome $i$ pays off \$1 if $i$ happens, and \$0 otherwise.\footnote{The dynamic parimutuel market falls outside this framework since the winning payoff depends on future trades.}

Different mechanisms can be used to determine how these securities are priced. Each mechanism is specified using a differentiable \emph{cost function} $C: \real^N \rightarrow \real$. This cost function is simply a potential function describing the amount of money currently wagered in the market as a function of the quantity of shares purchased.  If $q_i$ is the number of shares of security $i$ currently held by traders, and a trader would like to purchase $r_i$ shares of each security (where $r_i$ could be zero or even negative, representing the sale of shares), the trader must pay $C(\q + \r) - C(\q)$ to the market maker.  The instantaneous price of security $i$ (that is, the price per share of an infinitely small number of shares) is then
$p_i = \partial C(\q)/\partial q_i$.

We say that a cost function is \emph{valid} if the associated prices satisfy two simple conditions:
\begin{enumerate}
\item For every $i \in \{1,\cdots,N\}$ and every $\vec{q} \in \real^N$, $p_i(\vec{q}) \geq 0$.
\item For every $\vec{q} \in \real^N$, $\sum_{i=1}^N p_i(\vec{q}) =1$ ~.
\end{enumerate}
The first condition ensures that the price of a security is never negative.  If the current price of the security associated with an outcome $i$ were negative, a trader could purchase shares of this security at a guaranteed profit.  The second condition ensures that the prices of all securities sum to 1.  If the prices summed to something less than (respectively, greater than) 1, then a trader could purchase (respectively, sell) small equal quantities of each security for a guaranteed profit.  Together, these conditions ensure that there are no arbitrage opportunities in the market.

These conditions also ensure that the current prices can always be viewed as a valid probability distribution over the outcome space.  In fact, these prices represent the market's current estimate of the probability that outcome $i$ will occur.

The following theorem gives sufficient and necessary conditions for the cost function $C$ to be valid.  While these properties of cost functions have been discussed elsewhere~\cite{CP07,ADPWY09}, the fact that they are both sufficient and necessary for any valid cost function $C$ is important for our later analysis.  As such, we state the full proof here for completeness.

\begin{theorem}
A cost function $C$ is valid if and only if it satisfies the following three properties:
\begin{enumerate}
\item {\sc Differentiability:} The partial derivatives $\partial C(\q)/\partial q_i$ exist for all $\vec{q}\in \real^N$ and $i\in\{1, \dots, N\}$.
\item {\sc Increasing Monotonicity:} For any $\vec{q}$ and $\vec{q}\,'$, if $\vec{q} \geq \vec{q}\,'$, then $C(\vec{q}) \geq C(\vec{q}\,')$. 
\item {\sc Positive Translation Invariance:} For any $\vec{q}$ and any constant $k$, $C(\vec{q} + k\vec{1}) = C(\vec{q}) + k$.
\end{enumerate}
\label{thm:validcostfunc}
\end{theorem}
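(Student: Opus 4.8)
The plan is to prove the two directions of the equivalence separately, throughout keeping in mind that a cost function is \emph{valid} precisely when its prices $p_i(\q) = \partial C(\q)/\partial q_i$ satisfy the nonnegativity condition and the sum-to-one condition. Since the prices are themselves defined as partial derivatives, \textsc{Differentiability} must hold in order for the notion of validity to even be well posed; it therefore appears on both sides of the equivalence, and the real content is to match \textsc{Increasing Monotonicity} with price nonnegativity and \textsc{Positive Translation Invariance} with the price-sum condition.

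For the forward direction (validity implies the three properties), I would first note that \textsc{Differentiability} is immediate. To establish \textsc{Increasing Monotonicity}, suppose $\q \geq \q\,'$ and move from $\q\,'$ to $\q$ one coordinate at a time, holding the others fixed. Along each such segment $C$ restricts to a differentiable function of a single variable whose derivative is the corresponding price $p_i \geq 0$; by the mean value theorem this restriction is nondecreasing, and telescoping over the $N$ coordinates gives $C(\q) \geq C(\q\,')$. For \textsc{Positive Translation Invariance}, I would consider the directional derivative of $C$ at $\q$ in the direction $\vec{1}$. By differentiability this equals $\sum_{i=1}^N p_i(\q) = 1$, so the single-variable function $k \mapsto C(\q + k\vec{1})$ has constant derivative $1$ and hence equals $C(\q) + k$.

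For the reverse direction (the three properties imply validity), I would recover each price condition by differentiating the relevant structural property. Price nonnegativity follows from \textsc{Increasing Monotonicity}: for $h > 0$ we have $\q + h\vec{e}_i \geq \q$, so the forward difference quotient $(C(\q + h\vec{e}_i) - C(\q))/h$ is nonnegative, and letting $h \to 0^+$ yields $p_i(\q) \geq 0$. The price-sum condition follows from \textsc{Positive Translation Invariance}: the identity $C(\q + k\vec{1}) = C(\q) + k$ shows directly that the directional derivative of $C$ at $\q$ along $\vec{1}$ equals $1$, and by differentiability this directional derivative is $\sum_{i=1}^N p_i(\q)$, giving $\sum_{i=1}^N p_i(\q) = 1$.

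I expect the only delicate point to be analytic rather than conceptual: translating the pointwise statements about the partial derivatives (prices) into the global statements about $C$ (monotonicity and invariance) and back. This is handled throughout by reducing to one-variable calculus --- the mean value theorem for monotonicity and one-sided difference quotients for nonnegativity --- and by using differentiability of $C$ to identify the directional derivative along $\vec{1}$ with the sum of prices $\sum_i p_i$. No step requires more than these standard tools, so the main work is simply organizing the four implications (two properties in each direction) cleanly.
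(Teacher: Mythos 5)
Your proposal is correct and follows essentially the same route as the paper's own proof: differentiability is matched to the prices being well defined, monotonicity is matched to price nonnegativity via one-variable arguments, and translation invariance is matched to the sum-to-one condition by identifying the derivative of $k \mapsto C(\q + k\vec{1})$ with $\sum_{i=1}^N p_i(\q)$. Your treatment is in fact slightly more detailed than the paper's on the monotonicity equivalence (which the paper dismisses as easy), and both arguments share the same mild analytic gloss of equating the directional derivative along $\vec{1}$ with the sum of partial derivatives.
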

\begin{proof}
Differentiability is necessary and sufficient for the price functions to be well-defined at all points.  It is easy to see that requiring the cost function to be monotonic is equivalent to requiring that $p_i(\vec{q}) \geq 0$ for all $i$ and $\vec{q}$.  We will show that requiring positive translation invariance is equivalent to requiring that the prices always sum to one.

First, assume that $\sum_{i=1}^N p_i(\vec{q}) = 1$ for all $\vec{q}$.  For any fixed value of $\vec{q}$, define $\vec{u} = \vec{u}(a) = \q + a \vec{1}$ and let $u_i$ be the $i$th component of $\vec{u}$.  Then for any $k$,
\begin{eqnarray*}
C(\vec{q} + k \vec{1}) - C(\vec{q})
&=& \int_{0}^k \frac{d C(\vec{q} + a \vec{1})}{d a} da
\\
&=& \int_{0}^k \sum_{i=1}^N \frac{\partial C(\vec{u})}{\partial u_i} \frac{\partial u_i}{\partial a} da 
\\
&=&  \int_{0}^k \sum_{i=1}^N p_i(\vec{u}) da 
= k~.
\end{eqnarray*}
This is precisely translation invariance.

Now assume instead that positive translation invariance holds.  Fix any arbitrary $\vec{q}\,'$ and $k$ and define $\vec{q} = \vec{q}\,' + k\vec{1}$.  Notice that by setting $\vec{q}\,'$ and $k$ appropriately, we can make $\vec{q}$ take on any arbitrary values. We have,
\[
\frac{\partial C(\vec{q})}{\partial k}
= \sum_{i=1}^N\frac{\partial C(\vec{q})}{\partial q_i}  \frac{\partial q_i}{\partial k}
= \sum_{i=1}^N p_i(\vec{q}).
\]
By translation invariance, $C(\vec{q}~'+k\vec{1}) = C(\vec{q}~') + k$. Thus, 
\[
\frac{\partial C(\vec{q})}{\partial k} = \frac{\partial (C(\vec{q}~') + k) }{\partial k}=1. 
\]
Combining the two equations, we have $\sum_{i=1}^N p_i(\vec{q}) = 1$.
\end{proof}

One quantity that is useful for comparing different market mechanisms is the worst-case loss of the market maker, 
\[
\max_{\q \in \real^N} \left( \max_{i \in \{1,\cdots,N\}} q_i -  (C(\q) - C(\vec{0}))  \right) ~.
\]
This is simply the difference between the maximum amount that the market maker might have to pay the winners and the amount of money collected by the market maker.

The Logarithmic Market Scoring Rule described above can be specified as a cost function based prediction market~\cite{H03,CP07}.  Then cost function of the LMSR is
\[
C(\vec{q}) = b \log \sum_{i=1}^N \e^{q_{i}/b} ~,
\]
and the corresponding prices are
\[
p_{i}(\vec{q})
= \frac{\partial C(\vec{q})}{\partial q_{i}}
= \frac{ \e^{q_{i}/b}}{\sum_{j=1}^N \e^{q_{j}/b}} ~.
\]
This formulation is equivalent to the market scoring rule formulation in the sense that a trader who changes the market probabilities from $\r$ to $\r\,'$ in the MSR formulation receives the same payoff for every outcome $i$ as a trader who changes the quantity vectors from any $\q$ to $\q\,'$ such that $p(\q) = \r$ and $p(\q\,') = \r\,'$ in the cost function formulation.

\section{Learning from Expert Advice}
\label{sec:experts}

We now briefly review the problem of learning from expert advice.  In this framework, an algorithm makes a sequence of predictions based on the advice of a set of $N$ \emph{experts} and receives a corresponding sequence of \emph{losses}.\footnote{This framework could be formalized equally well in terms of \emph{gains}, but losses are more common in the literature.}  The goal of the algorithm is to achieve a \emph{cumulative loss} that is ``almost as low'' as the cumulative loss of the best performing expert in hindsight.  No statistical assumptions are made about these losses. Indeed, algorithms are expected to perform well even if the sequence of losses is chosen by an adversary.

Formally, at every time step $t \in \{1,\cdots,T\}$, every expert $i \in \{1,\cdots, N\}$ receives a loss $\loss_{i,t}\in [0,1]$.  The cumulative loss of expert $i$ at time $T$ is then defined as $\Loss_{i,T} = \sum_{t=1}^T \loss_{i,t}$.  An algorithm $\alg$ maintains a weight $w_{i,t}$ for each expert $i$ at time $t$, where $\sum_{i=1}^n w_{i,t} = 1$. These weights can be viewed as a distribution over the experts.  The algorithm then receives its own instantaneous loss $\loss_{\alg,t} =\sum_{i=1}^n w_{i,t} \loss_{i,t}$, which can be interpreted as the expected loss the algorithm would receive if it always chose an expert to follow according to the current distribution. The cumulative loss of $\alg$ up to time $T$ is defined in the natural way as $\Loss_{\alg,T} = \sum_{t=1}^T \loss_{\alg,t}=\sum_{t=1}^T \sum_{i=1}^n w_{i,t} \loss_{i,t}$.

It is unreasonable to expect the algorithm to achieve a small cumulative loss if none of the experts perform well.  As such, it is typical to measure the performance of an algorithm in terms of its \emph{regret}, defined to be the difference between the cumulative loss of the algorithm and the loss of the best performing expert, that is,
\[
\Loss_{\alg,T} - \min_{i \in \{1,\cdots,N\}} \Loss_{i,T} .
\]
An algorithm is said to have \emph{no regret} if the average per time step regret approaches $0$ as $T$ approaches infinity.

The popular Randomized Weighted Majority (WM) algorithm~\cite{LW94,FS97} is an example of a no-regret algorithm.  Weighted Majority uses weights
\[
w_{i,t} = \frac{\e^{-\eta \Loss_{i,t}}}{\sum_{j=1}^n \e^{-\eta\Loss_{j,t}}} ,
\]
where $\eta > 0$ is a tunable parameter known as the \emph{learning rate}.  It is well known that the regret of WM after $T$ trials can be bounded as
\[
\Loss_{WM(\eta),T} - \min_{i \in \{1,\cdots,N\}} \Loss_{i,T} \leq \eta T + \frac{\log N}{\eta}.
\]
When $T$ is known in advance, setting $\eta=\sqrt{\log N /T}$ yields the standard $O(\sqrt{T \log N})$ regret bound.

It has been shown that the weights chosen by Weighted Majority are precisely those that minimize a combination of empirical loss and an entropic regularization term~\cite{KW97,KW99,HW09}.  More specifically, the weights at time $t$ are precisely those that  minimize
\[
\sum_{i=1}^N w_{i} \Loss_{i,t-1} - \frac{1}{\eta} \entro(\vec{w}) 
\]
among all $\vec{w} \in \simp_N$, where $\entro$ is the entropy.  This makes Weighted Majority an example of broader class of algorithms collectively known as \emph{Follow the Regularized Leader} algorithms~\cite{SS07,HK08,H09}.  This class of algorithms grew out of the following fundamental insight of \citet{KV05}.

Consider first the aptly named \emph{Follow the Leader} algorithm, which chooses weights at time $t$ to minimize $\sum_{i=1}^N w_{i,t} \Loss_{i,t-1}$.  This algorithm simply places all of its weight on the single expert (or set of experts) with the best performance on previous examples.  As such, this algorithm can be highly unstable, dramatically changing its weights from one time step to the next.  It is easy to see that Follow the Leader suffers $\Omega(T)$ regret in the worst case when the best expert changes frequently. For example, if there are only two experts with losses starting at $\angles{1/2,0}$ and then alternating $\angles{0,1}, \angles{1,0}, \angles{0,1}, \angles{1,0}, \cdots$, then FTL places a weight of 1 on the losing expert at every point in time.

To overcome this instability, \citet{KV05} suggested adding a random perturbation to the empirical loss of each expert, and choosing the expert that minimizes this perturbed loss.\footnote{A very similar algorithm was originally developed and analyzed by Hannan in the 1950s~\cite{H57}.}  However, in general this perturbation need not be random.  Instead of adding a random perturbation, it is possible to gain the necessary stability by adding a \emph{regularizer} $\Reg$ and choosing weights to minimize
\begin{equation}
\sum_{i=1}^N w_{i,t} \Loss_{i,t-1} + \frac{1}{\eta} \Reg(\vec{w}_t) ~.
\label{eqn:ftrl}
\end{equation}
This Follow the Regularized Leader (FTRL) approach gets around the instability of FTL and guarantees low regret for a wide variety of regularizers, as evidenced by the following bound of \citet{HK08}.
\begin{lemma}[\citet{HK08}] For any regularizer $\Reg$, the regret of FTRL can be bounded as
\begin{eqnarray*}
\lefteqn{\Loss_{FTRL(\Reg,\eta),T} - \min_{i \in \{1,\cdots,N\}} \Loss_{i,T} }
\\
&\leq& 
\sum_{t=1}^T \sum_{i=1}^N \loss_{i,t} (w_{i,t} - w_{i,t+1})
+ \frac{1}{\eta} \left(\Reg(\vec{w}_{T}) - \Reg(\vec{w}_0)\right) ~.
\end{eqnarray*}
\label{lem:ftrlbound}
\end{lemma}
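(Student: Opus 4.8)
The plan is to prove this by the classical ``Follow the Leader, Be the Leader'' (FTL--BTL) argument, which splits the regret of FTRL into a \emph{stability} term measuring how much the weights move between consecutive rounds, and a \emph{boundary} term coming from the regularizer. Throughout I treat each round's loss as the linear function $g_t(\vec{w}) = \sum_{i=1}^N w_i \loss_{i,t}$, so that the algorithm's instantaneous loss is $\loss_{\alg,t} = g_t(\vec{w}_t)$, and I let $\vec{w}_0 = \argmin_{\vec{w}\in\simp_N} \Reg(\vec{w})$ denote the pre-play weights the FTRL rule selects before any loss is seen (these are the round-$1$ weights, since $\Loss_{i,0}=0$). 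The whole argument will use only the \emph{optimality} of each $\vec{w}_t$ as a minimizer of its regularized objective; in particular no convexity of $\Reg$ is needed, consistent with the lemma being claimed for an arbitrary regularizer.

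First I would establish a Be-the-Leader inequality for the ``one-step-ahead'' weights $\vec{w}_{t+1} = \argmin_{\vec{w}\in\simp_N}\left[\sum_i w_i \Loss_{i,t} + \frac{1}{\eta}\Reg(\vec{w})\right]$, namely that for every comparator $\vec{u}\in\simp_N$,
\[
\frac{1}{\eta}\Reg(\vec{w}_0) + \sum_{t=1}^T g_t(\vec{w}_{t+1}) \;\leq\; \frac{1}{\eta}\Reg(\vec{u}) + \sum_{t=1}^T g_t(\vec{u}) .
\]
I would prove this by induction on $T$, treating $\tfrac{1}{\eta}\Reg$ as a ``round-$0$'' term: the base case is exactly that $\vec{w}_0$ minimizes $\Reg$, and the inductive step applies the induction hypothesis with the special substitution $\vec{u} = \vec{w}_{T+1}$ and then invokes the defining optimality of $\vec{w}_{T+1}$ for the augmented objective. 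Intuitively this says that a forecaster allowed to peek one step ahead and always play the current regularized leader does at least as well as any fixed $\vec{u}$.

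Next I would convert this into a bound on the genuine FTRL loss, which plays $\vec{w}_t$ (not $\vec{w}_{t+1}$) at round $t$. Writing
\[
\Loss_{\alg,T} = \sum_{t=1}^T g_t(\vec{w}_t) = \sum_{t=1}^T g_t(\vec{w}_{t+1}) + \sum_{t=1}^T \bigl(g_t(\vec{w}_t) - g_t(\vec{w}_{t+1})\bigr),
\]
and noting that $g_t(\vec{w}_t) - g_t(\vec{w}_{t+1}) = \sum_i \loss_{i,t}(w_{i,t}-w_{i,t+1})$ reproduces precisely the stated stability sum, I would substitute the Be-the-Leader inequality for $\sum_t g_t(\vec{w}_{t+1})$. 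Instantiating the comparator $\vec{u}$ as the best expert $i^*$ in hindsight (the corresponding vertex of $\simp_N$) makes $\sum_t g_t(\vec{u}) = \min_i \Loss_{i,T}$, and the two surviving regularizer evaluations are exactly the value at this comparator minus the value at $\vec{w}_0$, which is the term written $\tfrac{1}{\eta}(\Reg(\vec{w}_T) - \Reg(\vec{w}_0))$ in the statement.

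The main obstacle is not the inductive BTL step itself, which is short once one spots the comparator substitution $\vec{u}=\vec{w}_{T+1}$, but rather the careful bookkeeping of the two boundary terms: keeping the $\vec{w}_t$-versus-$\vec{w}_{t+1}$ indexing straight so that the decomposition yields precisely $\sum_{t}\sum_i\loss_{i,t}(w_{i,t}-w_{i,t+1})$, and tracking exactly which regularizer evaluations survive the telescoping so that the residual collapses to $\tfrac{1}{\eta}(\Reg(\vec{w}_T)-\Reg(\vec{w}_0))$ rather than an off-by-one variant.
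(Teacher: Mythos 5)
The paper never proves this lemma --- it is quoted verbatim from \citet{HK08} --- so the only meaningful comparison is to the standard proof, and your FTL--BTL route is exactly that proof. Your Be-the-Leader induction (base case: $\vec{w}_0$ minimizes $\Reg$; inductive step: substitute $\vec{u}=\vec{w}_{T+1}$ and invoke its optimality) and your decomposition of $\Loss_{\alg,T}$ into the one-step-ahead loss plus the stability sum $\sum_{t=1}^T\sum_{i=1}^N \loss_{i,t}(w_{i,t}-w_{i,t+1})$ are both correct, and you are right that only optimality of each iterate, not convexity of $\Reg$, is used.

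The gap is in your final sentence. What your argument actually yields is
\[
\Loss_{\alg,T} - \min_{i} \Loss_{i,T} \;\leq\; \sum_{t=1}^T\sum_{i=1}^N \loss_{i,t}(w_{i,t}-w_{i,t+1}) \;+\; \frac{1}{\eta}\left(\Reg(\vec{e}_{i^*}) - \Reg(\vec{w}_0)\right),
\]
where $\vec{e}_{i^*}$ is the vertex of $\simp_N$ placing all weight on the best expert; this is the form in \citet{HK08}, whose boundary term is $\Reg(\vec{u})-\Reg(\vec{w}_1)$ for the comparator $\vec{u}$. You then assert that this ``is'' the stated term $\frac{1}{\eta}(\Reg(\vec{w}_T)-\Reg(\vec{w}_0))$, but $\vec{w}_T$ in the statement is the FTRL iterate at round $T$, not the comparator vertex, and the two quantities genuinely differ --- no argument can close this step, because the statement read literally is false. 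Take $N=2$, $\Reg(\vec{w})=\sum_i w_i\log w_i$, $T=1$, and losses $(1,0)$: then $\vec{w}_0=\vec{w}_1=(1/2,1/2)$, the regret is $1/2$, the stability term is $1/2-\e^{-\eta}/(1+\e^{-\eta})<1/2$, and $\Reg(\vec{w}_T)-\Reg(\vec{w}_0)=0$, so the claimed inequality fails, while your comparator term $\frac{1}{\eta}\log 2$ rescues it. In short, you have correctly proved the Hazan--Kale lemma, but the last identification papers over a real discrepancy in this paper's transcription of it; the honest conclusion is that the boundary term should read $\Reg(\vec{e}_{i^*})-\Reg(\vec{w}_0)$, or the supremum $\sup_{\vec{w},\vec{w}'\in\simp_N}(\Reg(\vec{w})-\Reg(\vec{w}'))$, which is how the bound is actually used in Equation~\ref{eqn:lambdabound}, rather than $\Reg(\vec{w}_T)-\Reg(\vec{w}_0)$. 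Flagging that mismatch, rather than asserting the terms are ``exactly'' equal, is what a complete answer requires.
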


This lemma quantifies the trade-off that must be considered when choosing a regularizer.  If the range of the regularizer is too small, the weights will change dramatically from one round to the next, and the first term in the bound will be large.  On the other hand, if the range of the regularizer is too big, the weights that are chosen will be too far from the true loss minimizers and the second term will blow up.

It is generally assumed that the regularizer $\Reg$ is strictly convex.  This assumption ensures that Equation~\ref{eqn:ftrl} has a unique minimizer and that this minimizer can be computed efficiently.  \citet{H09} shows that if $\Reg$ is strictly convex then it is possible to achieve a regret of $O(\sqrt{T})$.  In particular, by optimizing $\eta$ appropriately the regret bound in Lemma~\ref{lem:ftrlbound} can be upper bounded by
\begin{equation}
2 \sqrt{2 \lambda \max_{\vec{w},\vec{w}\,' \in \simp_N} (\Reg(\vec{w}) - \Reg(\vec{w}\,'))  T}
\label{eqn:lambdabound}
\end{equation}
where $\lambda = \max_{\loss \in [0,1]^N, \vec{w} \in \simp_N} \loss^T [\nabla^2 \Reg(\vec{w})]^{-1} \loss$.

\section{Interpreting Prediction Markets as No-Regret Learners}
\label{sec:connection}

With this foundation in place, we are ready to describe how any bounded loss market maker can be interpreted as an algorithm for learning from expert advice.  The key idea is to equate the \emph{trades} made in the market with the \emph{losses} observed by the learning algorithm.  We can then view the market maker as essentially learning a probability distribution over outcomes by treating each observed trade as a training instance.  

More formally, consider any cost function based market maker with instantaneous price functions $p_i$ for each outcome $i$.  We convert such a market maker to an algorithm for learning from expert advice by setting the weight of expert $i$ at time $t$ using
\begin{equation}
w_{i,t} = p_i(-\epsilon \vec{\Loss}_{t-1}) ,
\label{eqn:weights}
\end{equation}
where $\epsilon > 0$ is a tunable parameter and $\vec{\Loss}_{t-1} = \langle \Loss_{1,t-1},\cdots,\Loss_{N,t-1} \rangle$ is the vector of cumulative losses at time $t-1$.  In other words, the weight on expert $i$ at time $t$ in the learning algorithm is the instantaneous price of security $i$ in the market when $-\epsilon \Loss_{j,t-1}$ shares have been purchased (or $\epsilon \Loss_{j,t-1}$ shares have been sold) of each security $j$.  We discuss the role of the parameter $\epsilon$ in more detail below.

First note that for any valid cost function based prediction market, setting the weights as in Equation~\ref{eqn:weights} entails valid expert learning algorithm.  Since the prices of any valid prediction market must be non-negative and sum to one, the weights of the resulting algorithm are guaranteed to satisfy these properties too.  Furthermore, the weights are a function of only the past losses of each expert, which the algorithm is permitted to observe.

Below we show that applying this conversion to any bounded-loss market maker with slowly changing prices yields a learning algorithm with $O(\sqrt{T})$ regret.  The quality of the regret bound obtained depends on the trade-off between market maker loss and how quickly the prices change.  We then show how this bound can be used to rederive the standard regret bound of Weighted Majority, the converse of the result of \citet{CFLPW08}.

\subsection{A Bound on Regret}
\label{sec:regretbound}

In order to derive a regret bound for the learning algorithm defined in Equation~\ref{eqn:weights}, it is necessary to make some restrictions on how quickly the prices in the market change.  If market prices change too quickly, the resulting learning algorithm will be unstable and will suffer high worst-case regret, as was the case with the naive Follow The Leader algorithm described in Section~\ref{sec:experts}.  To capture this idea, we introduce the notion of $\phi$-stability, defined as follows.
\begin{definition}
We say that a set of price functions $\vec{p}$ is \emph{$\phi$-stable} for a constant $\phi$ if $p_i$ is continuous and piecewise differentiable for all $i \in \{1,\cdots,N\}$ and $\sum_{i=1}^N \sum_{j=1}^N \abs{D_{i,j}(\vec{t})} \leq \phi$ for all $\vec{t}$, where
\[
D_{i,j}(\vec{t}) = 
\begin{cases}
\left. \frac{\partial p_i(\q)}{\partial q_j} \right|_{\q = \vec{t}} 
& \textrm{if $\frac{\partial p_i(\q)}{\partial q_j}$ is defined at $\vec{t}$,}
\\
0  
& \textrm{otherwise.}
\end{cases}
\]
\end{definition}

Defining $\phi$-stability in terms of the $D_{i,j}$ allows us to quantify how slowly the prices change even when the price functions are not differentiable at all points.  We can then derive a regret bound for the resulting learning algorithm using the following simple lemma.  This lemma states that when the quantity vector in the market is $\q$, if the price functions are $\phi$-stable, then the amount of money that the market maker would collect for the purchase of a small quantity $r_i$ of each security $i$ is not too far from the amount that the market maker would have collected had he instead priced the shares according to the fixed price $\vec{p}(\q)$.

\begin{lemma}
Let $C$ be any valid cost function yielding $\phi$-stable prices. For any $\epsilon > 0$, any $\q \in \real^N$, and any $\r \in \real^N$ such that $\abs{r_i} \leq \epsilon$ for $i \in
\{1,\cdots,N\}$,
\[
\abs{
\left(C(\q+\r) - C(\q) \right)
- \sum_{i=1}^N p_i(\q) r_i
}
\leq \frac{\epsilon^2 \phi}{2} ~.
\]
\label{lem:pricingdiffbound}
\label{LEM:PRICINGDIFFBOUND} 
\end{lemma}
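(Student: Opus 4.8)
The plan is to express the difference as a double integral of the second-order price derivatives $D_{i,j}$ along the straight-line path from $\q$ to $\q+\r$, and then bound that integral crudely using $\epsilon$-smallness of $\r$ together with $\phi$-stability.

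First I would use $p_i = \partial C/\partial q_i$ together with the differentiability of $C$ (property 1 of Theorem~\ref{thm:validcostfunc}) and the chain rule to write
\[
C(\q+\r) - C(\q) = \int_0^1 \frac{d}{ds}\, C(\q + s\r)\, ds = \int_0^1 \sum_{i=1}^N p_i(\q + s\r)\, r_i\, ds .
\]
Since $\sum_i p_i(\q) r_i = \int_0^1 \sum_i p_i(\q) r_i\, ds$, subtracting gives
\[
\left(C(\q+\r) - C(\q)\right) - \sum_{i=1}^N p_i(\q)\, r_i = \int_0^1 \sum_{i=1}^N \left(p_i(\q+s\r) - p_i(\q)\right) r_i\, ds .
\]
Next I would expand each inner price difference by a second application of the fundamental theorem of calculus, using the $D_{i,j}$ notation to absorb the points of non-differentiability:
\[
p_i(\q + s\r) - p_i(\q) = \int_0^s \sum_{j=1}^N D_{i,j}(\q + u\r)\, r_j\, du .
\]
Substituting, taking absolute values, and then applying $\abs{r_i}\le\epsilon$ for every $i$ and the $\phi$-stability bound $\sum_{i,j}\abs{D_{i,j}(\cdot)}\le\phi$ yields
\[
\abs{\left(C(\q+\r)-C(\q)\right) - \sum_{i=1}^N p_i(\q) r_i} \le \int_0^1 \int_0^s \epsilon^2 \sum_{i=1}^N \sum_{j=1}^N \abs{D_{i,j}(\q + u\r)}\, du\, ds \le \epsilon^2 \phi \int_0^1 s\, ds = \frac{\epsilon^2 \phi}{2} ,
\]
which is exactly the claimed bound; the factor $1/2$ comes precisely from the nested integral $\int_0^1\int_0^s du\, ds = 1/2$.

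The main obstacle is justifying the \emph{second} invocation of the fundamental theorem of calculus, since the price functions are assumed only continuous and piecewise differentiable rather than differentiable everywhere. The key point is that along the segment $u \mapsto \q + u\r$, the scalar map $u \mapsto p_i(\q+u\r)$ is continuous and differentiable off a measure-zero set of $u$, where its derivative equals $\sum_j D_{i,j}(\q+u\r)\, r_j$; setting $D_{i,j}=0$ at the non-differentiable points (as in the definition of $\phi$-stability) changes the integrand only on that measure-zero set, so the fundamental theorem still recovers the difference. Everything else is a routine interchange of the finite sum with the integral and the two elementary bounds $\abs{r_i r_j}\le\epsilon^2$ and $\sum_{i,j}\abs{D_{i,j}}\le\phi$.
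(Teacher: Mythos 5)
Your proposal is correct and follows essentially the same route as the paper's own proof: both apply the fundamental theorem of calculus along the segment $\q + s\r$ to write the cost difference as an integral of prices, expand the prices via a second integration of their (piecewise-defined) derivatives along the path, and bound the resulting nested integral using $\abs{r_i} \le \epsilon$ and $\phi$-stability, with the factor $1/2$ coming from $\int_0^1 \int_0^s du\, ds$. The only cosmetic difference is that the paper works with the directional derivative $D_i$ of $p_i$ along the path (set to zero where undefined) while you write the chain-rule expansion in terms of $D_{i,j}$ directly; the handling of non-differentiable points is the same in both.
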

The proof is in Appendix~\ref{app:pricingdiffbound}.

With this lemma in place, we are ready to derive the regret bound.  In the following theorem, it is assumed that $T$ is known a priori and therefore can be used to set $\epsilon$.  If $T$ is not known in advance, a standard ``doubling trick'' can be applied~\cite{C-B+97}.  The idea behind the doubling trick is to partition time into periods of exponentially increasing length, restarting the algorithm each period.  This leads to similar bounds with only an extra factor of $\log(T)$.

\begin{theorem}
  Let $C$ be any valid cost function yielding $\phi$-stable prices.  Let $B$ be a bound on the worst-case loss of the market maker mechanism associated with $C$.  Let $\alg$ be the expert learning algorithm with weights as in Equation~\ref{eqn:weights} with $\epsilon = \sqrt{2B/(\phi T)}$.  Then for any sequence of expert losses $\loss_{i,t} \in [0,1]$ over $T$ time steps, \[ \Loss_{\alg,T} - \min_{i \in \{1,\cdots,N\}} \Loss_{i,T} \leq \sqrt{2 B \phi T} ~.  \] 
\label{thm:mainreduction}
 \end{theorem}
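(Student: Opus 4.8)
The plan is to make the informal slogan ``trades equal losses'' fully precise by following a single running quantity vector through the market and then telescoping the market maker's cost against its worst-case loss bound. Concretely, I would set $\q_t = -\epsilon \vec{\Loss}_t$ for each $t$, so that $\q_0 = \vec{0}$ (since $\vec{\Loss}_0 = \vec{0}$) and the trade executed between consecutive rounds is $\r_t = \q_t - \q_{t-1} = -\epsilon \loss_t$, where $\loss_t = \angles{\loss_{1,t},\dots,\loss_{N,t}}$. Because each $\loss_{i,t} \in [0,1]$, we have $\abs{r_{i,t}} = \epsilon \loss_{i,t} \le \epsilon$, which is exactly the regime in which Lemma~\ref{lem:pricingdiffbound} applies. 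Note also that by Equation~\ref{eqn:weights} the algorithm's weights satisfy $w_{i,t} = p_i(\q_{t-1})$, so its instantaneous loss is $\loss_{\alg,t} = \sum_{i=1}^N p_i(\q_{t-1})\loss_{i,t}$.

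First I would apply Lemma~\ref{lem:pricingdiffbound} at each round with $\q = \q_{t-1}$ and $\r = \r_t$. Since $\sum_{i=1}^N p_i(\q_{t-1}) r_{i,t} = -\epsilon \loss_{\alg,t}$, the lemma gives
\[
\abs{\left(C(\q_t) - C(\q_{t-1})\right) + \epsilon \loss_{\alg,t}} \le \frac{\epsilon^2 \phi}{2},
\]
and in particular $\epsilon \loss_{\alg,t} \le C(\q_{t-1}) - C(\q_t) + \epsilon^2 \phi/2$. Summing over $t = 1,\dots,T$, the cost differences telescope, leaving
\[
\epsilon \Loss_{\alg,T} \le C(\q_0) - C(\q_T) + \frac{T\epsilon^2\phi}{2} = C(\vec{0}) - C(-\epsilon\vec{\Loss}_T) + \frac{T\epsilon^2\phi}{2}.
\]

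Next I would invoke the worst-case loss bound. By the definition of $B$, for every $\q$ we have $\max_i q_i - (C(\q) - C(\vec{0})) \le B$, i.e. $C(\vec{0}) - C(\q) \le B - \max_i q_i$. Setting $\q = -\epsilon\vec{\Loss}_T$ turns the coordinatewise maximum into a minimum over experts, since $\max_i(-\epsilon\Loss_{i,T}) = -\epsilon\min_i\Loss_{i,T}$, yielding $C(\vec{0}) - C(-\epsilon\vec{\Loss}_T) \le B + \epsilon\min_i \Loss_{i,T}$. Substituting and dividing through by $\epsilon$ gives the regret bound $\Loss_{\alg,T} - \min_i \Loss_{i,T} \le B/\epsilon + T\phi\epsilon/2$. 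Finally, plugging in $\epsilon = \sqrt{2B/(\phi T)}$, which is precisely the value minimizing the right-hand side, makes both terms equal to $\sqrt{B\phi T/2}$, for a total of $\sqrt{2B\phi T}$.

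I expect the only delicate step to be the bookkeeping around signs and the directions of the inequalities: one must check that the $-\epsilon$ scaling correctly converts the per-round weighted loss into the price-weighted trade appearing in Lemma~\ref{lem:pricingdiffbound}, and that negating the loss vector flips the $\max$ over coordinates (the market maker's worst-case payout) into the $\min$ over experts (the best expert's cumulative loss). Everything else, namely the telescoping and the final optimization over $\epsilon$, is routine. It is worth remarking that the two competing terms $B/\epsilon$ and $T\phi\epsilon/2$ encode exactly the trade-off described for FTRL after Lemma~\ref{lem:ftrlbound}: a large $\epsilon$ drives big trades $\r_t$ and hence rapidly moving prices (the stability term $T\phi\epsilon/2$ blows up), whereas a small $\epsilon$ keeps the quantity vector nearly fixed so that the weights never track the best expert (the term $B/\epsilon$ blows up).
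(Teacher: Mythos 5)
Your proof is correct and follows essentially the same route as the paper's: identifying trades $\r_t = -\epsilon\vec{\loss}_t$ with losses, applying Lemma~\ref{lem:pricingdiffbound} to each round, telescoping the cost, invoking the worst-case loss bound $B$ (with the sign flip turning $\max_i q_{i,T}$ into $-\epsilon\min_i\Loss_{i,T}$), and optimizing $\epsilon$. The only difference is cosmetic --- you apply the lemma per round and then sum, whereas the paper starts from the loss bound and substitutes the lemma into it --- so the two arguments are the same up to rearrangement.
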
 
\begin{proof}
By setting the weights as in Equation~\ref{eqn:weights}, we are essentially simulating a market over $N$ outcomes.  Let $r_{i,t}$ denote the number of shares of outcome $i$ purchased at time step $t$ in this simulated market, and denote by $\r_t$ the vector of these quantities for all $i$.  Note that $r_{i,t}$ is completely in our control since we are simply simulating a market, thus we can choose to set $r_{i,t} = -\epsilon \loss_{i,t}$ for all $i$ and $t$.  We have that $r_{i,t} \in [-\epsilon,0]$ for all $i$ and $t$ since $\loss_{i,t} \in [0,1]$.  Let $q_{i,t} = \sum_{t'=1}^t r_{i,t'}$ be the total number of outstanding shares of security $i$ after time $t$, with $\q_t$ denoting the vector over all $i$.  The weight assigned to expert $i$ at round $t$ of the learning algorithm corresponds to the instantaneous price of security $i$ in the simulated market immediately before round $t$, that is,  $w_{i,t} = p_i(-\epsilon \vec{\Loss}_{t-1}) = p_i(\q_{t-1})$.

By the definition of worst-case market maker loss, $\max_{i} q_{i,t}  - (C(\q_t) - C(\vec{0})) \leq B$.  It is easy to see that we can rewrite the left-hand side of this equation to obtain
\[
\max_{i \in \{1,\cdots,N\}} \sum_{t=1}^T r_{i,t} 
- \sum_{t=1}^t  \left(C(\q_{t}) - C(\q_{t-1}) \right)
\leq B ~.
\]
From Lemma~\ref{lem:pricingdiffbound}, this gives us that
\[
\max_{i \in \{1,\cdots,N\}} \sum_{t=1}^T r_{i,t} 
- \sum_{t=1}^t  
\left( \sum_{i=1}^N  p_i(\q_{t-1}) r_{i,t}
+ \frac{\epsilon^2 \phi}{2} \right)
\leq B .
\]
Substituting $p_i(\q_{t-1}) = w_{i,t}$ and $r_{i,t} = - \epsilon \loss_{i,t}$, we get
\[
\max_{i \in \{1,\cdots,N\}} \sum_{t=1}^T \left(- \epsilon \loss_{i,t} \right)
- \sum_{t=1}^t  \sum_{i=1}^N w_{i,t} \left( - \epsilon \loss_{i,t} \right)
\leq B + \frac{\epsilon^2 \phi T}{2} 
\]
and so
\begin{eqnarray*}
\Loss_{\alg,T} - \min_{i \in \{1,\cdots,N\}} \Loss_{i,T}
&=& \sum_{t=1}^t  \sum_{i=1}^N w_{i,t} \loss_{i,t}
- \min_i \sum_{t=1}^T \loss_{i,t}
\\
&\leq& \frac{B}{\epsilon} + \frac{\epsilon \phi T}{2}  .
\end{eqnarray*}
Setting $\epsilon = \sqrt{2B/(\phi T)}$ yields the bound.
\end{proof}

\subsection{Rederiving the Weighted Majority Bound}
\label{sec:wmbound}

\citet{CFLPW08} showed that the Weighted Majority regret bound can be used as a starting point to rederive the worst case loss of $b \log N$ of an LMSR market maker.  Here we show that the converse is also true; by applying Theorem~\ref{thm:mainreduction}, we can rederive the Weighted Majority bound from the bounded market maker loss of LMSR.

In order to apply Theorem~\ref{thm:mainreduction}, we must provide a bound on how quickly LMSR prices can change.  This is given in the following lemma, the proof of which is in Appendix~\ref{app:lmsrderiv}.

\begin{lemma}
Let $\p$ be the pricing function of a LMSR with parameter $b > 0$.  Then
\[
\sum_{i=1}^N \sum_{j=1}^N \abs{\frac{\partial p_i(\q)}{\partial q_j}}
\leq \frac{2}{b} ~.
\]
\label{lem:lmsrderiv}
\label{LEM:LMSRDERIV} 
\end{lemma}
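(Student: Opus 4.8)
The plan is to compute the Jacobian of the LMSR price function explicitly, take absolute values term by term, and then collapse the resulting double sum using the fact that the prices sum to one.

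First I would write $Z(\q) = \sum_{j=1}^N \e^{q_j/b}$ so that $p_i(\q) = \e^{q_i/b}/Z(\q)$, and differentiate. A direct quotient-rule calculation yields the familiar softmax Jacobian
\[
\frac{\partial p_i(\q)}{\partial q_j} = \frac{1}{b}\left(p_i \delta_{ij} - p_i p_j\right),
\]
where $\delta_{ij}$ is the Kronecker delta. This also confirms that the partials exist everywhere, so the $D_{i,j}$ appearing in the definition of $\phi$-stability coincide with these derivatives and the sum in the statement is exactly $\sum_{i,j}\abs{\partial p_i/\partial q_j}$.

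Next I would take absolute values. Since every price satisfies $p_i \in [0,1]$ (by validity of the cost function, Theorem~\ref{thm:validcostfunc}), the diagonal terms $\frac{1}{b}p_i(1-p_i)$ are already non-negative, and each off-diagonal term has absolute value $\frac{1}{b}p_i p_j$. Summing over all pairs and splitting into diagonal and off-diagonal contributions, I would use $\sum_i p_i = 1$ to evaluate each piece separately: the diagonal sum equals $\frac{1}{b}(1 - \sum_i p_i^2)$, and the off-diagonal sum $\frac{1}{b}\sum_{i\neq j}p_i p_j = \frac{1}{b}((\sum_i p_i)^2 - \sum_i p_i^2)$ also equals $\frac{1}{b}(1 - \sum_i p_i^2)$. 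Adding the two gives the exact identity
\[
\sum_{i=1}^N\sum_{j=1}^N \abs{\frac{\partial p_i(\q)}{\partial q_j}} = \frac{2}{b}\left(1 - \sum_{i=1}^N p_i^2\right),
\]
and since $\sum_i p_i^2 \geq 0$ this is at most $2/b$, as claimed.

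There is no genuine obstacle here—the result is essentially a routine derivative computation—so the only thing requiring care is the sign bookkeeping when the absolute values are taken. That step is what makes the constraint $\sum_i p_i = 1$ usable, collapsing the diagonal and off-diagonal pieces into two identical clean sums rather than an unwieldy expression. It is worth noting that the bound is tight only in the limit $\sum_i p_i^2 \to 0$; the slack factor $1 - \sum_i p_i^2$ could in principle be retained to give a sharper $\phi$, but the uniform value $2/b$ is all that Theorem~\ref{thm:mainreduction} requires.
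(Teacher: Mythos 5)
Your proof is correct and follows essentially the same route as the paper's: both compute the softmax Jacobian $\partial p_i(\q)/\partial q_j = \frac{1}{b}\left(p_i \delta_{ij} - p_i p_j\right)$ and observe that the diagonal and off-diagonal contributions to the absolute sum are equal, so the total is $\frac{2}{b}\sum_{i}\sum_{j\neq i} p_i p_j$. The only place you diverge is the final bounding step. The paper treats $\frac{2}{b}\sum_{i}\sum_{j\neq i}p_i p_j$ as a constrained optimization problem over the simplex and argues via KKT conditions that the maximum occurs at the uniform distribution, giving $\frac{2}{b}\cdot\frac{N(N-1)}{N^2} \leq \frac{2}{b}$; you instead use the algebraic identity $\sum_{i\neq j}p_i p_j = \left(\sum_i p_i\right)^2 - \sum_i p_i^2 = 1 - \sum_i p_i^2$ and bound this by $1$, which is more elementary and avoids the optimization entirely. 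One small correction to your closing remark: on the simplex one always has $\sum_i p_i^2 \geq 1/N$, so the limit $\sum_i p_i^2 \to 0$ is not attainable for fixed $N$; the sharp uniform-in-$\q$ constant is $\frac{2}{b}\left(1 - \frac{1}{N}\right)$, attained at $p_i = 1/N$ for all $i$, which is precisely what the paper's optimization step produces before relaxing to $2/b$. This imprecision has no bearing on the validity of your argument, since $\frac{2}{b}\left(1 - \sum_i p_i^2\right) \leq \frac{2}{b}$ holds regardless.
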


Using Equation~\ref{eqn:weights} to transform the LMSR into a learning algorithm, we end up with weights
\[
w_{i,t}
= \frac{ \e^{- \epsilon\Loss_{i,t-1}/b}}{\sum_{j=1}^N \e^{- \epsilon \Loss_{j,t-1}/b}}  ~.
\]
Setting $\epsilon = \sqrt{2B/(\phi T)} = b \sqrt{\log N/T}$, we see that these weights are equivalent to those used by Weighted Majority with the learning rate $\eta = \epsilon / b = \sqrt{\log N/T}$.  As mentioned above, this is the optimal setting of $\eta$.  Notice that these weights do not depend on the value of the parameter $b$ in the prediction market.

We can now apply Theorem~\ref{thm:mainreduction} to rederive the standard Weighted Majority regret bound stated in Section~\ref{sec:experts}.  In particular, setting $B = b \log N$ and $\phi = 2/b$, we get that when $\eta = \sqrt{\log(N)/T}$,
\[
\Loss_{WM,T} - \min_{i \in \{1,\cdots,N\}} \Loss_{i,T}
\leq 2 \sqrt{T \log N} ~.
\]

\section{Connections Between Market Scoring Rules, Cost Functions, and Regularization}
\label{sec:connections}

In this section, we establish the formal connections among market scoring rules, cost function based markets, and the class of Follow the Regularized Leaders algorithms. We start with a representation theorem for cost function based markets, which is crucial in our later analysis. 

\subsection{A Representation Theorem for Convex Cost Functions}
\label{sec:riskmeasures}

In this section we show a representation theorem for convex cost functions.  The proof of this theorem relies on the connection between convex cost functions and a class of functions known in the finance literature as convex risk measures, which was first noted by \citet{ADPWY09}.  Convex risk measures were originally introduced by \citet{FS02} to model different attitudes towards risk in financial markets.  A \emph{risk measure} $\risk$ can be viewed as a mapping from a vector of returns (corresponding to each possible outcome of an event) to a real number.  The interpretation is that a vector of returns $\vec{x}$ is ``preferred to'' the vector $\vec{x}\,'$ under a risk measure $\risk$ if and only if $\risk(\vec{x}) < \risk(\vec{x}\,')$.

Formally, a function $\risk$ is a \emph{convex risk measure} if it satisfies the following three properties:
\begin{enumerate}
\item {\sc Convexity:} $\risk(\vec{x})$ is a convex function of $\vec{x}$. 

\item {\sc Decreasing Monotonicity:} For any $\vec{x}$ and $\vec{x}\,'$, if $\vec{x} \geq \vec{x}\,'$, then $\risk(\vec{x}) \leq \risk(\vec{x}\,')$.

\item {\sc Negative Translation Invariance:} For any $\vec{x}$ and value $k$, $\risk(\vec{x} + k\vec{1}) = \risk(\vec{x}) - k$.
\end{enumerate}

The financial interpretations of these properties are not important in our setting.  More interesting for us is that \citet{FS02} provide a representation theorem that states that a function $\risk$ is a convex risk measure if and only if it can be represented as
\[
\risk(\vec{x}) = \sup_{\vec{p}\in \simp_N} \left( - \sum_{i=1}^N p_i x_i - \alpha (\vec{p})\right)
\]
where $\alpha:\simp_N \to (-\infty, \infty]$ is a convex, lower semi-continuous function referred to as a \emph{penalty function}.  
This fact is useful because it allows us to obtain the following result, which was alluded to informally by \citet{ADPWY09}.  The full proof is included here for completeness.

\begin{lemma}
A function $C$ is a valid convex cost function if and only if it is differentiable and can be represented as 
\begin{equation}
C(\vec{q}) = \sup_{\vec{p}\in \simp_N} \left(\sum_{i=1}^N p_i q_i - \alpha (\vec{p})\right)
\label{eqn:riskrep}
\end{equation}
for a convex and lower semi-continuous function $\alpha$.  Furthermore, for any quantity vector $\q$, the price vector $\vec{p}(\q)$ corresponding to $C$ is the distribution $\vec{p}$ maximizing $\sum_{i=1}^N p_i q_i - \alpha (\vec{p})$.
\label{lem:costrepresentation}
\end{lemma}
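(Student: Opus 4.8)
The plan is to reduce the statement to the Föllmer--Schied representation theorem for convex risk measures via the change of variables $\risk(\vec{x}) := C(-\vec{x})$, and then to establish the ``furthermore'' price claim separately through a subgradient argument. First I would record the correspondence between the defining properties of $C$ and those of $\risk$. Composing with the linear map $\vec{x}\mapsto -\vec{x}$, convexity of $C$ is equivalent to convexity of $\risk$; increasing monotonicity of $C$ (so $\vec{q}\geq\vec{q}\,'$ implies $C(\vec{q})\geq C(\vec{q}\,')$) is equivalent to decreasing monotonicity of $\risk$; and positive translation invariance of $C$ is equivalent to negative translation invariance of $\risk$, since $\risk(\vec{x}+k\vec{1}) = C(-\vec{x}-k\vec{1}) = C(-\vec{x}) - k = \risk(\vec{x})-k$. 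Hence $C$ satisfies Convexity, Increasing Monotonicity, and Positive Translation Invariance exactly when $\risk$ is a convex risk measure.

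Next I would invoke the two quoted characterizations. By Föllmer--Schied, $\risk$ is a convex risk measure if and only if $\risk(\vec{x}) = \sup_{\vec{p}\in\simp_N}\left(-\sum_{i=1}^N p_i x_i - \alpha(\vec{p})\right)$ for some convex, lower semi-continuous $\alpha$; substituting $\vec{x}=-\vec{q}$ turns this into precisely Equation~\ref{eqn:riskrep}. By Theorem~\ref{thm:validcostfunc}, validity of $C$ is equivalent to Differentiability together with Increasing Monotonicity and Positive Translation Invariance. Stringing these together yields the chain: $C$ is a valid convex cost function $\iff$ $C$ is differentiable and satisfies Convexity, Increasing Monotonicity, and Positive Translation Invariance $\iff$ $C$ is differentiable and $\risk$ is a convex risk measure $\iff$ $C$ is differentiable and admits the representation in Equation~\ref{eqn:riskrep}. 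I would stress that differentiability must be carried along as a separate hypothesis on both sides, since the representation theorem supplies the penalty function $\alpha$ but not smoothness of $C$. For the $\Leftarrow$ direction the other three properties can in fact be read directly off the representation, making that direction self-contained modulo differentiability: a supremum of affine functions is convex; the simplex identity $\sum_i p_i(q_i+k) = \sum_i p_i q_i + k$ gives positive translation invariance; and nonnegativity of the coordinates of $\vec{p}$ gives increasing monotonicity.

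Finally I would prove the ``furthermore'' claim. Fix $\vec{q}$ and let $\vec{p}^*$ attain the supremum in Equation~\ref{eqn:riskrep}, so that $\sum_i p^*_i q_i - \alpha(\vec{p}^*) = C(\vec{q})$. For every $\vec{q}\,'$,
\[
C(\vec{q}\,') \geq \sum_{i=1}^N p^*_i q'_i - \alpha(\vec{p}^*) = C(\vec{q}) + \sum_{i=1}^N p^*_i (q'_i - q_i).
\]
Thus $\vec{p}^*$ is a subgradient of the convex function $C$ at $\vec{q}$. Since $C$ is differentiable at $\vec{q}$, its subgradient there is unique and equals $\nabla C(\vec{q})$, whose $i$th coordinate is exactly the price $p_i(\vec{q})$; hence $\vec{p}^* = \vec{p}(\vec{q})$, which in particular forces the maximizer to be unique.

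I expect the main obstacle to be bookkeeping rather than depth: keeping the sign conventions in the change of variables aligned so that the simplex constraint $\sum_i p_i = 1$ is precisely what converts positive translation invariance into the additive behaviour $C(\vec{q}+k\vec{1})=C(\vec{q})+k$, and treating differentiability as an independent hypothesis that the representation theorem does not deliver. The genuinely substantive step is the subgradient identification in the last paragraph, which is what pins the market prices to the maximizer of the dual objective.
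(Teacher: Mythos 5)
Your proof of the representation half (the change of variables $\risk(\vec{x}) = C(-\vec{x})$, the property-by-property correspondence, and the appeal to the F\"ollmer--Schied theorem together with Theorem~\ref{thm:validcostfunc}) is exactly the paper's argument. Where you genuinely diverge is the ``furthermore'' claim. The paper forms the Lagrangian of the constrained maximization over $\simp_N$, asserts that first-order KKT conditions are necessary and sufficient, and then invokes the envelope theorem to identify the maximizer with $\nabla C(\vec{q})$. You instead show directly that any maximizer $\vec{p}^{\,*}$ satisfies $C(\vec{q}\,') \geq C(\vec{q}) + \sum_i p^*_i (q'_i - q_i)$ for all $\vec{q}\,'$, i.e.\ is a subgradient of the convex function $C$ at $\vec{q}$, and conclude from differentiability that the subdifferential is the singleton $\{\nabla C(\vec{q})\}$. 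Your route is not only correct but arguably tighter than the paper's: the KKT/envelope-theorem argument implicitly presupposes enough regularity of $\alpha$ (differentiability, so that stationarity conditions make sense) that the lemma itself does not assume --- the paper only adds such hypotheses later, in Corollary~\ref{cor:costrepresentation} --- whereas your subgradient identification uses nothing beyond convexity and lower semi-continuity of $\alpha$ and differentiability of $C$, and it yields uniqueness of the maximizer as a bonus. The one point you should make explicit is attainment of the supremum: since $\alpha$ is lower semi-continuous and not identically $+\infty$ (else $C$ would not be real-valued), the objective $\vec{p} \mapsto \sum_i p_i q_i - \alpha(\vec{p})$ is upper semi-continuous on the compact set $\simp_N$, so a maximizer $\vec{p}^{\,*}$ exists; your phrase ``let $\vec{p}^{\,*}$ attain the supremum'' currently assumes this without justification (the paper is equally terse on this point).
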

\begin{proof}
Consider any differentiable function $C : \real^N \rightarrow \real$.  Let $\risk(\q) = C(-\q)$.  Clearly by definition, $\risk$ satisfies decreasing monotonicity if and only if $C$ satisfies increasing monotonicity, and $\risk$ satisfies negative translation invariance if and only if $C$ satisfies positive translation invariance.  Furthermore, $\risk$ is convex if and only if $C$ is convex.  By Theorem~\ref{thm:validcostfunc}, this implies that $C$ is a valid convex cost function if and only if $\risk$ is a convex risk measure.  The first half of the lemma then follows immediately from the representation theorem of \citet{FS02}.

Now, because $\alpha (\vec{p})$ is guaranteed to be convex, $\sum_{i=1}^N p_i q_i - \alpha (\vec{p})$ is a concave function of $\vec{p}$.  The constraints $\sum_{i=1}^N p_i = 1$ and $p_i \geq 0$ define a closed convex feasible set. Thus, the problem of maximizing $\sum_{i=1}^N p_i q_i - \alpha (\vec{p})$ with respect to $\vec{p}$ has a global optimal solution and first-order KKT conditions are both necessary and sufficient. Let $\vec{p}\,^*(\vec{q})$ denote an optimal $\vec{p}$ for this optimization problem. Then, $C(\vec{q}) = \sum_{i=1}^N p_i^*(\q) q_i - \alpha(\vec{p}\,^*(\vec{q}))$.  By the envelope theorem~\cite{MS02}, if $C(\vec{q})$ is differentiable, we have that for any $i$, $p_i^*(\vec{q}) = \partial C(\vec{q})/\partial q_i =p_i(\vec{q})$.  Thus the market prices are precisely those which maximize the inner expression of the cost function.  
\end{proof}

Furthermore, by a version of the envelope theorem~\cite{K93}, to ensure that $C$ is differentiable, it is sufficient to show that $\alpha$ is strictly convex and differentiable.

\begin{corollary}
A function $C$ is a valid convex cost function if it can be represented as in Equation~\ref{eqn:riskrep} for a strictly convex and differentiable function $\alpha$.  For any $\q$, the price vector $\vec{p}(\q)$ is the distribution $\vec{p}$ maximizing $\sum_{i=1}^N p_i q_i - \alpha (\vec{p})$.
\label{cor:costrepresentation}
\end{corollary}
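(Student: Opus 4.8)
The plan is to derive the corollary directly from Lemma~\ref{lem:costrepresentation}. Observe first that a strictly convex function is in particular convex, and a differentiable function on $\simp_N$ is continuous and hence lower semi-continuous. Therefore any strictly convex, differentiable $\alpha$ already satisfies the hypotheses placed on the penalty function in Lemma~\ref{lem:costrepresentation}. Consequently, the only thing separating the corollary's hypotheses from those of the lemma is the differentiability of $C$ itself: once we establish that the $C$ defined by Equation~\ref{eqn:riskrep} is differentiable, the lemma immediately yields both that $C$ is a valid convex cost function and that $\vec{p}(\q)$ is the distribution maximizing the inner expression. So the entire task reduces to proving differentiability of $C$.

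To set up, note that for each fixed $\q$ the objective $\sum_{i=1}^N p_i q_i - \alpha(\vec{p})$ is continuous in $\vec{p}$ (since $\alpha$ is) and is maximized over the compact set $\simp_N$; hence the supremum in Equation~\ref{eqn:riskrep} is attained and finite, so $C$ is everywhere finite. Because $\alpha$ is strictly convex, this objective is strictly concave in $\vec{p}$, so its maximizer $\vec{p}\,^*(\q)$ is unique. I would then invoke a standard continuity-of-maximizer argument (Berge's maximum theorem): the feasible set $\simp_N$ is a fixed compact set, the objective is jointly continuous in $(\vec{p},\q)$, and the maximizer is unique, so $\vec{p}\,^*(\cdot)$ is a continuous function of $\q$.

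With uniqueness and continuity of the maximizer in hand, I would apply the version of the envelope theorem cited as~\cite{K93}. The objective $g(\vec{p},\q) = \sum_{i=1}^N p_i q_i - \alpha(\vec{p})$ is continuously differentiable (here differentiability of $\alpha$ is used) with $\partial g/\partial q_i = p_i$, the maximizer $\vec{p}\,^*(\q)$ is unique and continuous, and these are exactly the hypotheses under which the envelope theorem guarantees that the value function $C(\q) = g(\vec{p}\,^*(\q),\q)$ is differentiable with $\partial C(\q)/\partial q_i = p_i^*(\q)$. This supplies the missing differentiability of $C$ and closes the reduction to Lemma~\ref{lem:costrepresentation}.

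The main obstacle is exactly this differentiability step: strict concavity yields a unique maximizer, but uniqueness alone only gives that $C$ has well-defined one-sided directional derivatives, not that it is differentiable. The two hypotheses are precisely what feed the envelope theorem — strict convexity of $\alpha$ for uniqueness of the maximizer, and differentiability of $\alpha$ (together with continuity of $\vec{p}\,^*$) for the joint smoothness of $g$ required to pass from uniqueness to full differentiability of the value function. An alternative route that avoids the envelope theorem is to identify $C$ as the convex conjugate of $\alpha$ extended to $+\infty$ off $\simp_N$ and to use the standard fact that the subdifferential $\partial C(\q)$ equals the set of maximizers of the inner expression; uniqueness of the maximizer then makes $\partial C(\q)$ a singleton, which for a finite convex function on $\real^N$ is equivalent to differentiability at $\q$.
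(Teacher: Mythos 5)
Your proposal is correct and follows essentially the same route as the paper: the paper obtains the corollary by combining Lemma~\ref{lem:costrepresentation} with the envelope theorem of~\cite{K93} (strict convexity plus differentiability of $\alpha$ yielding differentiability of $C$), which is exactly the reduction you carry out, with your uniqueness/Berge discussion filling in details the paper leaves implicit. Your alternative argument via the convex conjugate and singleton subdifferentials is a valid substitute for the envelope-theorem citation, but the core approach matches the paper's.
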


The ability to represent any valid cost function in this form allows us to define a bound on the worst-case loss of the market maker in terms of the penalty function of the corresponding convex risk measure.

\begin{lemma}
The worst-case loss of the market maker defined by the cost function in Equation~\ref{eqn:riskrep} is no more than
\[
\sup_{\vec{p},\vec{p}\,' \in \simp_N} \left(\alpha(\vec{p}) - \alpha(\vec{p}\,')\right).
\]
\label{lem:worstcaseloss}
\end{lemma}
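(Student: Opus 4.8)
The plan is to directly bound the worst-case loss expression
\[
\max_{\q \in \real^N} \left( \max_{i} q_i - \left(C(\q) - C(\vec{0})\right) \right)
\]
by exploiting the variational representation of $C$ from Equation~\ref{eqn:riskrep}. First I would dispose of the constant term: substituting $\q = \vec{0}$ into Equation~\ref{eqn:riskrep} gives $C(\vec{0}) = \sup_{\p \in \simp_N}(-\alpha(\p)) = -\inf_{\p \in \simp_N}\alpha(\p)$, so the normalization simply contributes the negative infimum of the penalty function.

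The key step is to bound $\max_{i} q_i - C(\q)$ for an arbitrary fixed $\q$. Let $i^*$ be an index achieving $\max_{i} q_i$, and let $\vec{e}_{i^*} \in \simp_N$ denote the vertex of the simplex placing all its weight on coordinate $i^*$. Because $C(\q)$ is a supremum over all of $\simp_N$, it dominates the value of the inner expression evaluated at this particular distribution:
\[
C(\q) \ge \sum_{i=1}^N (e_{i^*})_i\, q_i - \alpha(\vec{e}_{i^*}) = \max_{i} q_i - \alpha(\vec{e}_{i^*}).
\]
The point of choosing a vertex rather than an arbitrary distribution is that it makes the linear term $\sum_i p_i q_i$ equal \emph{exactly} $\max_{i} q_i$, so it cancels and leaves $\max_{i} q_i - C(\q) \le \alpha(\vec{e}_{i^*})$.

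Combining the two observations, for every $\q$,
\[
\max_{i} q_i - \left(C(\q) - C(\vec{0})\right) \le \alpha(\vec{e}_{i^*}) - \inf_{\p \in \simp_N}\alpha(\p) \le \sup_{\p \in \simp_N}\alpha(\p) - \inf_{\p\,' \in \simp_N}\alpha(\p\,'),
\]
where the final inequality uses $\vec{e}_{i^*} \in \simp_N$. Since the supremum and infimum range over independent variables, the right-hand side equals $\sup_{\p,\p\,' \in \simp_N}(\alpha(\p) - \alpha(\p\,'))$; as this bound is uniform in $\q$, taking the maximum over $\q$ yields the claim.

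I expect the main subtlety to be the choice of the simplex vertex as the feasible point in the supremum, since this is precisely what forces the $\max_{i} q_i$ term to cancel cleanly and avoids the slack one would incur from any interior distribution. A secondary point to note is that, because $\alpha$ takes values in $(-\infty, \infty]$, the inequalities above remain valid (and are vacuously true) whenever the penalty function is $+\infty$ at the relevant vertex, so no separate case analysis is needed.
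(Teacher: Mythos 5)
Your proof is correct and takes essentially the same route as the paper's: both arguments lower-bound $C(\q)$ by the inner expression evaluated where the linear term $\sum_i p_i q_i$ attains its maximum $\max_i q_i$ over $\simp_N$ (you do this explicitly at the vertex $\vec{e}_{i^*}$, while the paper packages it as the inequality $\sup_x(f(x)-g(x)) \geq \sup_x f(x) - \sup_{x'} g(x')$), so the $\max_i q_i$ terms cancel, and then combine with $C(\vec{0}) = \sup_{\p\,'}(-\alpha(\p\,'))$. The two presentations differ only in that your vertex evaluation is the standard proof of the paper's generic sup-inequality, so there is no substantive difference.
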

\begin{proof}
The worst-case loss of the market maker is
\begin{eqnarray*}
\lefteqn{\max_{\q \in \real^N} \left( \max_{i\in \{1,\cdots,N\}} q_i -  C(\q)\right) + C(\vec{0}) }
\\
&=& \max_{\q \in \real^N}
\left( \max_{i \in \{1,\cdots,N\}} q_i
- \sup_{\vec{p}\in \simp_N} \left(\sum_{i=1}^N p_i q_i - \alpha (\vec{p})\right) \right)
\\
&& + \sup_{\vec{p}\,'\in \simp_N} \left(-\alpha (\vec{p}\,')\right)
\\
&\leq& \max_{\q \in \real^N}
\!\left(\!\max_{i \in \{1,\cdots,N\}} q_i
- \!\left(\!\sup_{\vec{p}\in \simp_N} \sum_{i=1}^N p_i q_i 
- \sup_{\vec{p}\in \simp_N} \left(\alpha (\vec{p})\right)\!\right)\!\right)
\\
&& + \sup_{\vec{p}\,'\in \simp_N} \left(-\alpha (\vec{p}\,')\right)
\\
&=&
\max_{\q \in \real^N} \left( \max_{i \in \{1,\cdots,N\}} q_i -\max_{i \in \{1,\cdots,N\}} q_i \right)
+ \sup_{\vec{p}\in \simp_N} \left(\alpha (\vec{p})\right) 
\\
&& + \sup_{\vec{p}\,'\in \simp_N} \left(-\alpha (\vec{p}\,')\right)
\\
&=& \sup_{\vec{p},\vec{p}\,' \in \simp_N} \left(\alpha(\vec{p}) - \alpha(\vec{p}\,')\right).
\end{eqnarray*}
The inequality follows from the fact that for any functions $f$ and $g$ over any domain $\mathcal{X}$, $\sup_{x \in \mathcal{X}} (f(x) - g(x)) \geq \sup_{x \in \mathcal{X}} f(x) - \sup_{x' \in \mathcal{X}} g(x').$
\end{proof}

\subsection{Convex Cost Functions and Market Scoring Rules}

As described in Section~\ref{sec:predmarkets}, the Logarithmic Market Scoring Rule market maker can be defined as either a market scoring rule or a cost function based market.  The LMSR is not unique in this regard.  As we show in this section, any regular, strictly proper market scoring rule with differentiable scoring functions can be represented as a cost function based market.  Likewise, any convex cost function satisfying a few mild conditions corresponds to a market scoring rule.  As long as the market probabilities are nonzero, the market scoring rule and corresponding cost function based market are equivalent.  More precisely, a trader who changes the market probabilities from $\r$ to $\r\,'$ in the market scoring rule is guaranteed to receive the same payoff for every outcome $i$ as a trader who changes the quantity vectors from any $\q$ to $\q\,'$ such that $p(\q) = \r$ and $p(\q\,') = \r\,'$ in the cost function formulation as long as every component of $\r$ and $\r\,'$ is nonzero.  Moreover, any price vector that is achievable in the market scoring rule (that is, any $\p$ for which $s_i(\vec{p})$ is finite for all $i$) is achievable by the cost function based market.

The fact that there exists a correspondence between certain market scoring rules and certain  cost function based markets was noted by \citet{CP07}.  They pointed out that the MSR with scoring function $\vec{s}$ and the cost function based market with cost function $C$ are equivalent if for all $\q$ and all outcomes $i$, $C(\q) = q_i - s_i(\vec{p})$.  However, they did not provide any guarantees about the circumstances under which this condition can be satisfied.  \citet{ADPWY09} also made use of the equivalence between markets when this strong condition holds.  Our result gives very general precise conditions under which an MSR is equivalent to a cost function based market.

Recall from Lemma~\ref{lem:costrepresentation} that any convex cost function $C$ can be represented as $C(\q) = \sup_{\vec{p}\in \simp_N} \left(\sum_{i=1}^N p_i q_i - \alpha (\vec{p})\right)$ for a convex function $\alpha$.  Let $\alpha_C$ denote the function $\alpha$ corresponding to the cost function $C$.  In the following, we consider cost functions derived from scoring rules $\vec{s}$ by setting
\begin{equation}
\label{eqn:derivedcost}
\alpha_C(\vec{p}) = \sum_{i=1}^N p_i s_i (\vec{p})
\end{equation}
and scoring rules derived from convex cost functions with
\begin{equation}
\label{eqn:scoringrule}
s_i(\vec{p}) = \alpha_C(\vec{p}) - \sum_{j=1}^N \frac{\partial \alpha_C(\vec{p})}{\partial p_j} p_j +\frac{\partial \alpha_C(\vec{p})}{\partial p_i}.
\end{equation}
We show that there is a mapping between a mildly restricted class of convex cost function based markets and a mildly restricted class of strictly proper market scoring rules such that for every pair in the mapping, Equations~\ref{eqn:derivedcost} and~\ref{eqn:scoringrule} both hold.  Furthermore, we show that the markets satisfying these equations are equivalent in the sense described above.

\begin{theorem}
There is a one-to-one and onto mapping between the set of convex cost function based markets with strictly convex and differentiable potential functions $\alpha_C$ and the class of strictly proper, regular market scoring rules with differentiable scoring functions $\vec{s}$ such that for each pair in the mapping, Equations~\ref{eqn:derivedcost} and~\ref{eqn:scoringrule} hold.

Furthermore, each pair of markets in this mapping are equivalent when prices for all outcomes are positive, that is, the profit of a trade is the same in the two markets if the trade starts with the same market prices and results in the same market prices and the prices for all outcomes are positive before and after the trade.  Additionally, every price vector $\vec{p}$ achievable in the market scoring rule is achievable in the cost function based market.
\label{thm:msrequivalence}
\end{theorem}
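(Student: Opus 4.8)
The plan is to build the bijection explicitly from the two formulas in the statement and to recognize that Equation~\ref{eqn:derivedcost} identifies $\alpha_C$ with the \emph{expected score} function $\sum_i p_i s_i(\p)$ of a proper scoring rule, so that Equation~\ref{eqn:scoringrule} is exactly the classical Savage/Gneiting--Raftery representation of a proper scoring rule in terms of a convex function and its gradient (see \citet{Gneiting:07}). Concretely, I would define a map $\Phi$ sending a convex cost function $C$ (equivalently its potential $\alpha_C$) to the scoring rule $\vec{s}$ given by Equation~\ref{eqn:scoringrule}, and a map $\Psi$ sending a scoring rule $\vec{s}$ to the potential $\alpha_C$ given by Equation~\ref{eqn:derivedcost} (and thence to $C$ via Lemma~\ref{lem:costrepresentation}). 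The theorem then amounts to showing that $\Phi$ and $\Psi$ land in the stated classes and are mutual inverses. Throughout I write $\partial_j$ for $\partial/\partial p_j$.

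First I would check well-definedness. For $\Phi$: given strictly convex differentiable $\alpha_C$, the $s_i$ of Equation~\ref{eqn:scoringrule} are differentiable, and a direct computation using $\sum_i p_i = 1$ shows $\sum_i p_i s_i(\p\,') = \alpha_C(\p\,') + \langle \p - \p\,', \nabla\alpha_C(\p\,')\rangle$; convexity of $\alpha_C$ then gives $\sum_i p_i s_i(\p) \ge \sum_i p_i s_i(\p\,')$, i.e.\ properness, with strictness from strict convexity, and regularity following from finiteness of $\alpha_C$ on $\simp_N$. For $\Psi$: properness makes $\alpha_C(\p) = \sup_{\p\,'} \sum_i p_i s_i(\p\,')$ a supremum of affine functions, hence convex, with strict convexity from strict propriety, and differentiable because $\vec{s}$ is. The maps are inverse: $\Psi\circ\Phi = \mathrm{id}$ is the $\sum_i p_i = 1$ computation just mentioned (it recovers $\alpha_C$ from the $s_i$ built out of it), while $\Phi\circ\Psi = \mathrm{id}$ is the key step: writing $\partial_j \alpha_C(\p) = s_j(\p) + \sum_k p_k\, \partial_j s_k(\p)$ and invoking the first-order optimality condition for $\p = \argmax_{\p\,'}\sum_i p_i s_i(\p\,')$ (strict propriety), which forces $\sum_k p_k\,\partial_j s_k(\p)$ to equal a single Lagrange multiplier $\mu$ independent of $j$, one gets $\partial_j\alpha_C = s_j + \mu$; substituting into Equation~\ref{eqn:scoringrule} cancels the multipliers and returns $s_i$ exactly.

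For the payoff equivalence I would establish the pointwise identity $C(\q) = q_i - s_i(\p(\q))$ for every outcome $i$ whenever $\p(\q)$ is strictly positive. This follows by writing the interior KKT stationarity condition $q_j = \partial_j\alpha_C(\p(\q)) + \lambda$ for the maximization in Lemma~\ref{lem:costrepresentation}, substituting into both $C(\q) = \sum_j p_j(\q) q_j - \alpha_C(\p(\q))$ and Equation~\ref{eqn:scoringrule}, and observing that the two expressions agree and are independent of $i$. Granting this, a trade moving quantities from $\q$ to $\q\,'$ pays the trader $(q'_i - q_i) - (C(\q\,') - C(\q)) = (q'_i - C(\q\,')) - (q_i - C(\q)) = s_i(\p(\q\,')) - s_i(\p(\q))$, which is precisely the market scoring rule payoff $s_i(\r\,') - s_i(\r)$ for the same price move $\r = \p(\q)$, $\r\,' = \p(\q\,')$. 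Achievability of any $\p$ with all $s_i(\p)$ finite follows from the same stationarity condition: taking $q_j = \partial_j\alpha_C(\p)$ makes $\p$ the maximizer, so $\p = \p(\q)$.

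I expect the main obstacle to be the $\Psi$ direction: proving that $\alpha_C(\p) = \sum_i p_i s_i(\p)$ is genuinely strictly convex and differentiable purely from strict propriety and differentiability of $\vec{s}$, and then cleanly recovering Equation~\ref{eqn:scoringrule}. The delicate points are that gradients on the simplex are only determined up to additive multiples of $\vec{1}$ (which is exactly why Equation~\ref{eqn:scoringrule} is written as $\partial_i\alpha_C - \sum_j p_j\partial_j\alpha_C$, a projection that kills the $\vec{1}$ component), and the boundary and regularity bookkeeping needed to match ``$s_i(\p)$ finite'' with differentiability of $\alpha_C$ and hence with achievability in the cost function based market.
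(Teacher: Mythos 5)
Your proposal is correct, and it follows the same macro-structure as the paper's proof (well-definedness of both maps, mutual inverseness of Equations~\ref{eqn:derivedcost} and~\ref{eqn:scoringrule}, payoff equivalence via the KKT conditions of the optimization in Lemma~\ref{lem:costrepresentation}, then achievability), but it substitutes direct arguments where the paper leans on cited machinery. The paper invokes Theorem 1 of \citet{Gneiting:07} (the Savage/Gneiting--Raftery characterization, Equation~\ref{eqn:grform}) twice: once to conclude that Equation~\ref{eqn:scoringrule} yields a regular, strictly proper rule, and once to recover Equation~\ref{eqn:scoringrule} from Equation~\ref{eqn:derivedcost}, by writing $\vec{s}$ in the form of Equation~\ref{eqn:grform} for some convex $G$, computing $G = \alpha_C$, and using differentiability of $\alpha_C$ to make the subderivative unique. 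You instead reprove the smooth case of that theorem: properness of $\Phi(\alpha_C)$ from the first-order convexity inequality applied to the identity $\sum_i p_i s_i(\p\,') = \alpha_C(\p\,') + \langle \p - \p\,', \nabla\alpha_C(\p\,')\rangle$, convexity of $\Psi(\vec{s})$ as a supremum of affine functions, and $\Phi\circ\Psi = \mathrm{id}$ from stationarity of $\p\,' \mapsto \sum_i p_i s_i(\p\,')$ at $\p\,'=\p$, which makes $\sum_k p_k \partial_j s_k(\p)$ a constant multiplier that Equation~\ref{eqn:scoringrule} cancels. Likewise, your payoff equivalence routes through the pointwise identity $C(\q) = q_i - s_i(\vec{p}(\q))$ --- exactly the sufficient condition of \citet{CP07} quoted before the theorem --- whereas the paper substitutes the KKT conditions directly into the profit difference and carries the complementary-slackness terms explicitly. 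What your route buys is self-containedness and transparency about the geometry (gradients on $\simp_N$ matter only up to multiples of $\vec{1}$, which Equation~\ref{eqn:scoringrule} projects out); what the paper's route buys is uniform treatment of the simplex boundary, since the cited theorem is stated for subderivatives. The one caution: your multiplier argument for $\Phi\circ\Psi=\mathrm{id}$ and the identity $C(\q) = q_i - s_i(\vec{p}(\q))$ are valid as stated only where the relevant price vectors are interior; on the boundary the nonnegativity multipliers reappear (the paper's own computation produces $s_i(\vec{p}(\q)) - \mu_i(\q)$ and discards $\mu_i(\q)$ only when prices are positive). You flagged this, and it is repaired either by a continuity argument extending the identities from the interior or by passing to the subgradient formulation; it is bookkeeping, not a structural flaw.
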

\begin{proof}
We first show that the function $\alpha_C$ in Equation~\ref{eqn:derivedcost} is strictly convex and differentiable and the scoring rule in Equation~\ref{eqn:scoringrule} is regular, strictly proper and differentiable.  We then show that Equations~\ref{eqn:derivedcost} and~\ref{eqn:scoringrule} are equivalent.  Finally, we  show the equivalence between the two markets.

Consider the function $\alpha_C$ in Equation~\ref{eqn:derivedcost}.  Since we have assumed that $s_i$ is differentiable for all $i$, $\alpha_C$ is differentiable too.  Additionally, it is known that a scoring rule is strictly proper only if its expected value is strictly convex~\cite{Gneiting:07}, so $\alpha_C$ is strictly convex.

Consider the scoring rule defined in Equation~\ref{eqn:scoringrule}. By Theorem 1 of~\citet{Gneiting:07}, a regular scoring rule $s_i(\vec{p})$ is strictly proper if and only if there exists a strictly convex function $G(\vec{p})$ such that 
\begin{equation}
s_i(\vec{p}) = G(\vec{p}) - \sum_{j=1}^N  p_j \dot{G}_j(\vec{p}) + \dot{G}_i(\vec{p}) ,
\label{eqn:grform}
\end{equation}
where $G_j(\vec{p})$ is any subderivative of $G$ with respect to $p_j$ (if $G$ is differentiable, $\dot{G}_j = \partial G(\vec{p}) / \partial p_j$).  This immediately implies that the scoring rule defined in Equation~\ref{eqn:scoringrule} is a regular strictly proper scoring rule since $\alpha(\vec{p})$ is strictly convex.  We will see below that $s_i$ is also differentiable.

It is easy to see that Equation~\ref{eqn:scoringrule} implies Equation~\ref{eqn:derivedcost}.  Suppose Equation~\ref{eqn:scoringrule} holds.  Then
\begin{eqnarray*}
\sum_{i=1}^N p_i s_i (\vec{p})
&=& \sum_{i=1}^N p_i \!\left(\!\alpha_C(\vec{p}) - \sum_{j=1}^N \frac{\partial \alpha_C(\vec{p})}{\partial p_j} p_j +\frac{\partial \alpha_C(\vec{p})}{\partial p_i}\!\right)\!
\\
&=& \alpha_C(\vec{p}) ~.
\end{eqnarray*}
This also shows that $s_i$ is differentiable for all $i$, since the derivative of $\alpha_C$ is well-defined at all points and
\[
\frac{\partial \alpha_C(\vec{p})}{\partial p_i}
= s_i(\vec{p}) + \sum_{i=1}^N \frac{\partial s_i(\vec{p})}{\partial p_i} ~.
\]


To see that Equation~\ref{eqn:derivedcost} implies Equation~\ref{eqn:scoringrule}, suppose that Equation~\ref{eqn:derivedcost} holds.  We know that the scoring rule $\vec{s}$ can be expressed as in Equation~\ref{eqn:grform} for some function $G$.  For this particular $G$,
\[
\alpha_C(\vec{p}) 
= \sum_{i=1}^N p_i \left(G(\vec{p}) -\sum_{j=1}^N p_j \dot{G}_j(\vec{p}) +  \dot{G}_i(\vec{p}) \right)
= G(\vec{p}) ~.
\]
Since $G(\vec{p}) = \alpha_C(\vec{p})$ and $\alpha_C$ is differentiable (meaning that $\partial \alpha_C / \partial p_i$ is the only subderivative of $\alpha_C$ with respect to $p_i$), this implies Equation~\ref{eqn:scoringrule}.

We have established the equivalence between Equations~\ref{eqn:derivedcost} and~\ref{eqn:scoringrule}. We now show that a trader gets exactly the same profit for any realized outcome in the two markets if the market prices are positive.

Suppose in the cost function based market a trader changes the outstanding shares from $\vec{q}$ to $\vec{q}\,'$. This trade changes the market price from $\vec{p}(\vec{q})$ to $\vec{p}(\vec{q}\,')$.  If outcome $i$ occurs, the trader's profit is
\begin{eqnarray}
\lefteqn{(q_i' -q_i) - \left(C(\vec{q}\,')-C(\vec{q})\right)}
\nonumber\\
&=& (q_i' -q_i) - \left(\sum_{j=1}^N p_j(\vec{q}\,') q_j' - \alpha_C(\vec{p}(\vec{q}\,'))\right)
\nonumber\\
&& + \left(\sum_{j=1}^N p_j(\vec{q}) q_j - \alpha_C(\vec{p}(\vec{q}))\right)
\nonumber\\
&=& \left(q_i'- \sum_{j=1}^N p_j(\vec{q}\,') q_j' +\alpha_C(\vec{p}(\vec{q}\,'))\right)
\nonumber\\
&& -\left(q_i- \sum_{j=1}^N p_j(\vec{q}) q_j +\alpha_C(\vec{p}(\vec{q}))\right).
\label{eqn:profit}
\end{eqnarray}
From Lemma~\ref{lem:costrepresentation}, we know that $\vec{p}(\vec{q})$ is the optimal solution to the convex optimization $\max_{\vec{p} \in \simp_N} \left(\sum_{i=1}^N p_i q_i - \alpha_C(\vec{p})\right)$.
The Lagrange function of this optimization problem is
\[
L =  \left(\sum_{i=1}^N p_i q_i - \alpha_C(\vec{p})\right) - \lambda (\sum_{i=1}^N p_i -1)+\sum_{i=1}^N \mu_i p_i.
\]
Since $\vec{p}(\vec{q})$ is optimal, the KKT conditions require that $\partial L / \partial p_i =0$, which implies that for all $i$,
\begin{equation}
\label{eqn:foc}
q_i = \frac{\partial \alpha_C(\vec{p}(\vec{q}))}{\partial p_i(\vec{q})}+\lambda(\vec{q}) - \mu_i(\vec{q}) , 
\end{equation}
where $\mu_i(\vec{q}) \geq 0$ and $\mu_i(\vec{q})p_i(\vec{q}) =0$. Plugging (\ref{eqn:foc}) into (\ref{eqn:profit}), we have 
\begin{align}
&(q_i' -q_i) - \left(C(\vec{q}\,')-C(\vec{q})\right)
\nonumber \\
&=\left(\! \frac{\partial \alpha_C(\vec{p}(\vec{q}\,'))}{\partial p_i(\vec{q}\,')} -\sum_{j=1}^N  p_j(\vec{q}\,')\frac{\partial \alpha_C(\vec{p}(\vec{q}\,'))}{\partial p_j(\vec{q}\,')} + \alpha_C(\vec{p}(\vec{q}\,')) - \mu_i(\vec{q}\,')\!\right)
\nonumber \\
&\quad -\left(\frac{\partial \alpha_C(\vec{p}(\vec{q}))}{\partial p_i(\vec{q})} -\sum_{j=1}^N  p_j(\vec{q})\frac{\partial \alpha_C(\vec{p}(\vec{q}))}{\partial p_j(\vec{q})} + \alpha_C(\vec{p}(\vec{q})) - \mu_i(\vec{q})\right)
\nonumber \\
&=\left(s_i(\vec{p}(\vec{q}\,')) - \mu_i(\vec{q}\,')\right) - \left(s_i(\vec{p}(\vec{q})) - \mu_i(\vec{q})\right).
\label{eqn:costfuncprofit}
\end{align}
When $p_i(\vec{q}) >0$ and $p_i(\vec{q}\,') >0$,  $\mu_i(\vec{q}) = \mu_i(\vec{q}\,') = 0$. In this case, the profit of the trader in the cost function based market is the same as that in the market scoring rule market when he changes the market probability from $\vec{p}(\vec{q})$ to $\vec{p}(\vec{q}\,')$.  

Finally, observe that using the cost function based market it is possible to achieve any price vector $\vec{r}$ with finite scores $s_i(\vec{r})$ by setting $q_i = s_i(\vec{r})$ for all $i$.  By Lemma~\ref{lem:costrepresentation}, for this setting of $\q$, $p(\q)$ is the vector $\p$ that maximizes $\sum_{i=}^N p_i s_i(\vec{r}) - \sum_{i=1}^N p_i s_i(\vec{p})$.  Since $\vec{s}$ is strictly proper, this is maximized at $\vec{p} = \vec{r}$.  Since $\vec{s}$ is regular, this implies that it is possible to achieve any prices in the interior of the probability simplex using the cost function based market (and any prices $\p$ on the exterior as long as $s_i(\p)$ is finite for all $i$).
\end{proof}

\subsection{Convex Cost Functions and FTRL}
\label{sec:ftrlconnection}

Consider a prediction market with a convex cost function represented as $C(\q) = \sup_{\vec{p}\in \simp_N} \left(\sum_{i=1}^N p_i q_i - \alpha (\vec{p})\right)$ and the corresponding learning algorithm with weights $w_{i,t} = p_i(-\epsilon \vec{\Loss}_{t-1})$.  (Recall that $\vec{\Loss}_{t-1} = \langle \Loss_{1,t-1},\cdots,\Loss_{N,t-1} \rangle$ is the vector of cumulative losses at time $t-1$.)  By Lemma~\ref{lem:costrepresentation}, the weights chosen at time $t$ are those that maximize the expression $- \epsilon \sum_{i=1}^N w_i \Loss_{i,t-1} - \alpha (\vec{w})$, or equivalently, those that minimize the expression
\[
\sum_{i=1}^N w_i \Loss_{i,t-1} + \frac{1}{\epsilon} \alpha (\vec{w}) ~.
\]

This expression is of precisely the same form as Equation~\ref{eqn:ftrl}, with $\alpha$ playing the role of the regularizer and $\epsilon$ controlling the trade-off between the regularizer and the empirical loss.  This implies that every convex cost function based prediction market can be interpreted as a Follow the Regularized Leader algorithm with a convex regularizer!  By applying Theorem~\ref{thm:mainreduction} and Lemma~\ref{lem:worstcaseloss}, we can easily bound the regret of the resulting algorithm as follows.
\begin{theorem}
  Let $C$ be any valid convex cost function yielding $\phi$-stable prices, and let $\alpha_C$ be the penalty function associated with $C$.  Let $\alg$ be the expert learning algorithm with weights as in Equation~\ref{eqn:weights} with $\epsilon = \sqrt{2  \sup_{\vec{p},\vec{p}\,' \in \simp_N} (\alpha_C(\vec{p}) - \alpha_C(\vec{p}\,'))/(\phi T)}$.  Then for any sequence of expert losses $\loss_{i,t} \in [0,1]$ over $T$ time steps, 
\[ 
\Loss_{\alg,T} - \min_{i \in \{1,\cdots,N\}} \Loss_{i,T} 
\leq 
\sqrt{2 T \phi \sup_{\vec{p},\vec{p}\,' \in \simp_N} \left(\alpha_C(\vec{p}) - \alpha_C(\vec{p}\,')\right)} ~.
\]
\label{thm:ftrlreduction}
 \end{theorem}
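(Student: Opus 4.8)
The plan is to obtain this theorem as an immediate corollary of the two results already established in this section, namely Theorem~\ref{thm:mainreduction} and Lemma~\ref{lem:worstcaseloss}. The entire argument reduces to checking that the hypotheses of those results align with the statement at hand and then performing a single substitution, so no new machinery is needed.

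First I would invoke Lemma~\ref{lem:costrepresentation} to guarantee that, since $C$ is a valid convex cost function, it admits the representation in Equation~\ref{eqn:riskrep} for its associated penalty function $\alpha_C$. This is precisely what licenses the use of Lemma~\ref{lem:worstcaseloss}, which then tells us that the worst-case loss of the corresponding market maker is at most $\sup_{\vec{p},\vec{p}\,' \in \simp_N} (\alpha_C(\vec{p}) - \alpha_C(\vec{p}\,'))$. Call this quantity $B$; it is a legitimate upper bound on the worst-case loss, which is exactly the input that Theorem~\ref{thm:mainreduction} requires.

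Next I would apply Theorem~\ref{thm:mainreduction} directly with this value of $B$. Since $C$ is valid and yields $\phi$-stable prices by hypothesis, all preconditions of that theorem are met. Setting $\epsilon = \sqrt{2B/(\phi T)}$ as prescribed and substituting $B = \sup_{\vec{p},\vec{p}\,' \in \simp_N} (\alpha_C(\vec{p}) - \alpha_C(\vec{p}\,'))$ recovers exactly the choice of $\epsilon$ in the statement. Likewise, the conclusion of Theorem~\ref{thm:mainreduction} that the regret is at most $\sqrt{2 B \phi T}$ becomes $\sqrt{2 T \phi \sup_{\vec{p},\vec{p}\,' \in \simp_N} (\alpha_C(\vec{p}) - \alpha_C(\vec{p}\,'))}$ after the same substitution, which is precisely the claimed bound.

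There is essentially no obstacle here, since both ingredients are already in hand; the only point requiring mild care is that Theorem~\ref{thm:mainreduction} was stated for an arbitrary upper bound $B$ on the worst-case loss rather than the exact value. Inspecting its proof confirms that only the inequality $\max_i q_{i,t} - (C(\q_t) - C(\vec{0})) \leq B$ is used, so any valid upper bound---in particular the one furnished by Lemma~\ref{lem:worstcaseloss}---suffices, and both the prescribed $\epsilon$ and the resulting regret bound are expressed in terms of that same $B$.
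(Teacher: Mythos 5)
Your proposal is correct and follows exactly the route the paper takes: the paper derives Theorem~\ref{thm:ftrlreduction} precisely by plugging the worst-case loss bound $B = \sup_{\vec{p},\vec{p}\,' \in \simp_N}(\alpha_C(\vec{p}) - \alpha_C(\vec{p}\,'))$ from Lemma~\ref{lem:worstcaseloss} into Theorem~\ref{thm:mainreduction}. Your added observation that Theorem~\ref{thm:mainreduction} only requires an upper bound on the market maker's loss (not its exact value) is a valid and worthwhile clarification, and it is indeed consistent with how that theorem is stated.
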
 
This bound is very similar to the bound for FTRL given in Equation~\ref{eqn:lambdabound}, with $\phi$ playing the role of $\lambda$.

The connections we established in the previous section imply that every strictly proper market scoring rule can also be interpreted as a FTRL algorithm, now with a \emph{strictly} convex regularizer. Conversely, any FTRL algorithm with a differentiable and strictly convex regularizer can be viewed as choosing weights at time $t$ to minimize the quantity
\[
\sum_{i=1}^N w_i \left( \epsilon \Loss_{i,t-1} + s_i (\vec{w})\right)
\]
for a strictly proper scoring rule $\vec{s}$.  Perhaps it is no surprise that the weight updates of FTRL algorithms can be framed in terms of proper scoring rules given that proper scoring rules are commonly used as loss functions in machine learning~\cite{BSS05,RW09} and FTRL has previously been connected to Bregman divergences~\cite{SS07,HK08,H09} which are known to be related to scoring rules~\cite{Gneiting:07}.

This connection hints at why market scoring rules and convex cost function based markets may be able to obtain accurate estimates of probability distributions in practice.  Both types of markets are essentially \emph{learning} the distributions by treating market trades as training data.  Beyond that, both markets correspond to well-understood learning algorithms with stable weights and guarantees of no regret.

\subsection{Relation to the SCPM}
\label{sec:scpm}

\citet{ADPWY09} present another way of describing convex cost function based prediction markets, which they call the Sequential Convex Pari-Mutuel Mechanism (SCPM).  The SCPM is defined in terms of limit orders instead of market prices, but the underlying mathematics are essentially the same.  In the SCPM, traders specify a maximum quantity of shares that they would like to purchase and a maximum price per share that they are willing to spend.  The market then decides how many shares of the trade to accept by solving a convex optimization problem.

\citet{ADPWY09} show that for every SCPM, there is an equivalent convex cost function based market.  For each limit order, the number of shares accepted by the market maker in the SCPM is the minimum of the number of shares requested by the trader and the number of shares that it would take to drive the market price of the shares in the corresponding cost function based market to the limit price of the trader.  Thus our results imply that any SCPM mechanism can also be interpreted as a Follow the Regularized Leader algorithm for learning from expert advice.

We remark that \citet{ADPWY09} also describe an interpretation of the SCPM in terms of convex risk measures and suggest that the associated penalty function is related to the underlying problem of learning the distribution over outcomes.  However, their interpretation is very different from ours.  They view the penalty function as characterizing ``the market maker's commitment to learning the true distribution'' since it impacts both the worst case market maker loss and the willingness of the market maker to accept limit orders.  On the contrary, we view the penalty function as a regularizer necessary to make the market prices stable.

\section{Example: The Quadratic MSR and Online Gradient Descent}

In the previous section we described the relationship between market scoring rules, cost function based markets with convex cost functions, and Follow the Regularized Leader algorithms.  We discussed how the Logarithmic Market Scoring Rule can be represented equivalently as a cost function based market, and how it corresponds to Weighted Majority in the expert learning setting.  In this section, we illustrate the relationship through another example.  In particular, we show that the Quadratic Market Scoring Rule can be written equivalently as a cost function based market (namely the Quad-SCPM of \citet{ADPWY09}).  We then show that this market corresponds to the well-studied online gradient descent algorithm in the learning setting and give a bound on the regret of this algorithm using Theorem~\ref{thm:ftrlreduction}.

The Quadratic Market Scoring Rule (QMSR) is the market scoring rule corresponding to the quadratic scoring function in Equation~\ref{eqn:qsr}.  As was the case in the LMSR, the parameters $a_1, \cdots, a_N$ do not affect the prices or payments of this market.  As such, we assume that $a_i = 0$ for all $i$.

Theorem~\ref{thm:msrequivalence} implies that we can construct a cost function based market with equivalent payoffs to the QMSR whenever prices are nonzero using the cost function
\begin{eqnarray*}
  C(\vec{q})
&=& \sup_{\vec{p}\in \simp_N} \left(\sum_{i=1}^N p_i q_i - \sum_{i=1}^N p_i 
b\left(2p_i - \sum_{i=1}^N p_i^2 \right) \right)
\\
&=& \sup_{\vec{p}\in \simp_N} \left(\sum_{i=1}^N p_i q_i - b \sum_{i=1}^N p_i^2 \right) ~.
\end{eqnarray*}
This is precisely the cost function associated with the Quad-SCPM market with a uniform prior, which was previously known to be equivalent to the QMSR when prices are nonzero~\cite{ADPWY09}.  The worst case loss of the market maker in both markets is $b (N-1)/N$.

Following the argument in Section~\ref{sec:ftrlconnection}, this market corresponds to the FTRL algorithm with regularizer $\eta = 1/b$ and $\Reg(\vec{w}) = \sum_{i=1}^N w_i^2$.  It has been observed that using FTRL with a regularizer of this form is equivalent to online gradient descent~\cite{HAK07,H09}.  Thus we can use Theorem~\ref{thm:ftrlreduction} to show a regret bound for gradient descent. 

We first show that the Quad-SCPM prices are $\phi$-stable for $\phi = (N^2-1)/(2b) < N^2/(2b)$.  (See Appendix~\ref{app:quad} for details.)  We can therefore apply Theorem~\ref{thm:ftrlreduction} using $\phi = N^2/(2b)$ and $\sup_{\vec{p},\vec{p}\,' \in \simp_N} \left(\alpha(\vec{p}) - \alpha(\vec{p}\,')\right) = b(N-1)/N < b$ to see that for gradient descent,
\[
\Loss_{GD,T} - \min_{i \in \{1,\cdots,N\}} \Loss_{i,T}
\leq N\sqrt{T} ~.
\]
This matches the known regret bound for general gradient descent applied to the experts setting~\cite{Z03}.

\section{Discussion}

We have demonstrated the elegant mathematical connection between market scoring rules, cost function based prediction markets, and no-regret learning.  This connection is thought-provoking on its own, as it yields to new interpretations of well-known prediction market mechanisms.  The interpretation of the penalty function as a regularizer can shed some light on which market scoring rule or cost function based market is best to run under different assumptions about traders.

Additionally, this connection has the potential to be of use in the design of new prediction market mechanisms and learning algorithms.  In recent years there has been an interest in finding ways to tractably run market scoring rules over combinatorial or infinite outcome spaces~\cite{CGP08,GCP09,CFLPW08}.  For example, a market maker might wish to accept bets over permutations (``horse A will finish the race ahead of horse B''), Boolean spaces (``either a Democrat will win the 2010 senate race in Delaware or a Democrat will win in North Dakota''), or real numbers (``Google's revenue in the first quarter of 2010 will be between $\$x$ and $\$y$''), in which case simply running a naive implementation of an LMSR (for example) would be infeasible.  As mentioned above, by exploiting the connection between Weighted Majority and the LMSR, \citet{CFLPW08} showed that an extension of the Weighted Majority algorithm to permutation learning~\cite{HW09} could be used to approximate prices in an LMSR over permutations.  Given our new understanding of the connection between markets and learning and the growing literature on no-regret algorithms for large or infinite sets of experts~\cite{HP05}, it seems likely that similar learning-based techniques could be developed to calculate market prices for other types of large outcome spaces too.

\bibliographystyle{plainnat}
{\small{\bibliography{marketsml}}}

\appendix

\section{Proof of Lemma 2}
\label{app:pricingdiffbound}

Fix the vectors $\q$ and $\r$.  Let $\u(s) = \q + s \r$. Similarly to how we have defined $D_{i,j}(\vec{t})$, define
\[
D_{i}(x) = 
\begin{cases}
\left. \frac{\partial p_i(\vec{u}(s))}{\partial s} \right|_{s=x} 
& \textrm{if  $\frac{\partial p_i(\vec{u}(s))}{\partial s} $ is defined at $x$,}
\\
0  
& \textrm{otherwise.}
\end{cases}
\]
Using $D_{i}$ in place of the derivative  allows us to integrate over the derivative even when it is not defined at single points.  For any point at which the derivatives are defined, we have
\[
\frac{\partial C(\u(s))}{\partial s}
= \sum_{i=1}^N \frac{\partial C(\u(s))}{\partial u_i(s)} 
       \frac{\partial u_i(s)}{\partial s}
= \sum_{i=1}^N p_i(\u(\cdot)) r_i ~.
\]
Applying the fundamental theorem of calculus, we have that
\begin{eqnarray*}
\lefteqn{C(\q+\r) - C(\q) }
\\
&=& \int_0^1 \left. \frac{\partial C(\u(s))}{\partial s} \right|_{s=x} dx
= \int_0^1 \sum_{i=1}^N p_i(\u(x)) r_i  \ dx
\\
&=& \int_0^1 \sum_{i=1}^N 
\left(p_i(\u(0)) + \int_0^x 
D_{i}(y)
dy \right) r_i  \ dx
\\
&=& 
\sum_{i=1}^N p_i(\q) r_i
+ \int_0^1 \int_0^x \sum_{i=1}^N r_i 
D_{i}(y)
\  dy \  dx ~.
\end{eqnarray*}
Rearranging terms, this gives us that
\[
C(\q+\r) - C(\q) - \sum_{i=1}^N p_i(\q) r_i
= \int_0^1 \int_0^x \sum_{i=1}^N r_i 
D_{i}(y)
\ d y \ d x.
\]

To prove the lemma, it is sufficient to bound the absolute value of the expression on the right.   This is where the $\phi$-stability of the prices comes into play. At any point where the derivatives are defined,
\begin{eqnarray*}
\sum_{i=1}^N \abs{ \frac{\partial p_i(\u(s))}{\partial s} } 
&=& \sum_{i=1}^N \abs{ \sum_{j=1}^N \frac{\partial p_i(\q)}{\partial
    q_j} r_j}
\leq \epsilon \sum_{i=1}^N \sum_{j=1}^N
\abs{\frac{\partial p_i(\q)}{\partial
    q_j}} 
\\
&\leq& \epsilon \phi ~.
\end{eqnarray*}
Since we have assumed that the prices are piecewise differentiable this implies that
\begin{eqnarray*}
\lefteqn{\abs{
\int_0^1 \int_0^x \sum_{i=1}^N r_i 
D_{i}(y)
\ dy \ dx}}
\\
&\leq& 
\epsilon \int_0^1 \int_0^x \sum_{i=1}^N 
\abs{D_{i}(y)}
\ dy \ dx 
\\
&\leq&
\epsilon \int_0^1 \int_0^x 
\epsilon \phi
\ dy \ dx
= \epsilon^2 \phi 
 \int_0^1 x \ dx
= \frac{\epsilon^2 \phi}{2} ~.
\end{eqnarray*}
This bounds the absolute value of the right hand side of the equation above and proves the lemma.
\qed

\section{Proof of Lemma 3}
\label{app:lmsrderiv}

For every $i$ and every $j \neq i$, we have
\begin{eqnarray*}
\frac{\partial p_i(\q)}{\partial q_i}
&=& \frac{\partial}{\partial q_i} \frac{ \e^{q_{i}/b}}{\sum_{j=1}^N \e^{q_{j}/b}}
\\
&=& \frac{1}{b} 
\frac{e^{q_i/b} \sum_{j=1}^N \e^{q_{j}/b} - \left(e^{q_i/b}\right)^2}
{\left(\sum_{j=1}^N \e^{q_{j}/b}\right)^2}
\\
&=& \frac{1}{b} \left(p_i(\q) - p_i(\q)^2\right) 
= \frac{1}{b} p_i(\q) \sum_{j \neq i} p_j(\q)
\end{eqnarray*}
and
\begin{eqnarray*}
\frac{\partial p_i(\q)}{\partial q_j}
&=& \frac{\partial}{\partial q_j} \frac{ \e^{q_{i}/b}}{\sum_{j=1}^N \e^{q_{j}/b}}
= \frac{1}{b}
\frac{- \e^{q_i/b} \e^{q_j/b}}
{\left(\sum_{j=1}^N \e^{q_{j}/b}\right)^2}
\\
&=& - \frac{1}{b} p_i(\q) p_j(\q) ~.
\end{eqnarray*}
Thus we have
\begin{eqnarray*}
\sum_{i=1}^N \sum_{j=1}^N \abs{\frac{\partial p_i(\q)}{\partial q_j}}
&=& \sum_{i=1}^N \abs{\frac{\partial p_i(\q)}{\partial q_i}} + 
\sum_{i=1}^N \sum_{j \neq i} \abs{\frac{\partial p_i(\q)}{\partial q_j}}
\\
&=& \frac{2}{b} 
\sum_{i=1}^N \sum_{j \neq i} p_i(\q) p_j(\q) ~.
\end{eqnarray*}
We would like to find the prices that maximize this quantity.  Dropping the argument $\q$ to simply notation, this is equivalent to solving a simple optimization problem:
\[
\maximize_{\p \in \simp_N} \ \ \frac{2}{b}  \sum_{i=1}^N \sum_{j \neq i} p_i p_j .
\]
It is straight-forward to show (e.g., using the KKT conditions) that this expression is maximized when the prices are equal across all securities, so $p_i = 1/N$ for all $i$.  Then
\[
\sum_{i=1}^N \sum_{j=1}^N \abs{\frac{\partial p_i(\q)}{\partial q_j}}
\leq \frac{2}{b} \frac{N(N-1)}{N^2}
\leq \frac{2}{b} ~.
\]
\qed

\section{Stability of Quad-SCPM Prices}
\label{app:quad}
The cost function of Quad-SCPM can be written as 
\[
C(\vec{q})= \sup_{\vec{p}\in\simp_N} \left(\sum_{i=1}^N p_i q_i - b \sum_{i=1}^N p_i^2 \right) ~.
\]
By Lemma~\ref{lem:costrepresentation}, the price function of Quad-SCPM is defined by the optimal solution to the optimization problem in this function. 
The Lagrange function corresponding to the constrained optimization problem is
\[
L =  \left(\sum_{i=1}^n p_i q_i - b\sum_{i=1}^Np_i^2\right) 
- \lambda \left(\sum_{i=1}^N p_i -1\right)+\sum_{i=1}^N \mu_i p_i.
\]
Because the objective function is strictly concave in $\vec{p}$, there is a unique optimal solution that satisfies the KKT conditions: $\partial L/\partial p_i=0~\forall i$, $\sum_{i=1}^N p_i =1$, $p_i\geq0~\forall i$, $\mu_i \geq 0~\forall i$, and $\mu_i p_i =0~\forall i$. Thus, the price function is defined by
\begin{equation}\label{quad}
\begin{cases}
p_i = \frac{1}{N} + \frac{q_i+\mu_i}{2b} - \frac{\sum_{j=1}^N (q_j+\mu_j)}{2bN}~\forall i\\
p_i \mu_i =0~\forall i\\
p_i\geq 0~\forall i\\
\mu_i \geq 0~\forall i. 
\end{cases}
\end{equation}

When $1/N + q_i/(2b) - \sum_{i=1}^N q_i / (2bN) > 0$, $\mu_i=0$ and $p_i >0$ for all $i$ and the price function is
\[
p_i(\vec{q}) = \frac{1}{N} + \frac{q_i}{2b} - \frac{\sum_{j=1}^N q_j}{2bN},
\]
which is the same as that of QMSR. In this case, we have
\[
\frac{\partial p_i}{\partial q_i}=\frac{1}{2b} - \frac{1}{2bN} ~\forall i 
\quad \textrm{and} \quad 
\frac{\partial p_i}{\partial q_j}= - \frac{1}{2bN}~\forall j\neq i ~.
\]

Consider the case in which the prices for some outcomes equal 0.  Given $\vec{q}$, let $M=\{m: p_m >0, \mu_m =0\}$ be the set of outcomes that have positive prices. Let $K=\{k: p_k=0, \mu_k >0\}$ be the set of the outcomes that have a positive $\mu_k$. Let $L = \{l: p_l =0, \mu_l=0\}$ be the set of outcomes for which both $p_l$ and $\mu_l$ are 0. 
Denote $\tilde{p} = \min_{m\in M} p_m$ and $\tilde{\mu} = \min_{k\in K} \mu_k$. For $0<|\epsilon|<\min \{\tilde{\mu}, 2b\tilde{p}(N-|K|-|L|)\}$, we consider the following cases: 
\squishlist

\item $L =\emptyset$ and $i \in M$

Consider changing $\vec{q}$ to $\vec{q}\,'$, where $q_i' = q_i + \epsilon$ and $q_j' = q_j~\forall j\neq i$. We can verify that the price function for $\vec{q}\,'$  is defined by
\begin{equation*}
\begin{cases}
p_i' = p_i + (\frac{1}{2b} -\frac{1}{2b(N-|K|)})\epsilon\\
p_j' = p_j -\frac{\epsilon}{2b(N-|K|)}&\forall j \in M, j\neq i\\ 
p_k' = p_k =0 &\forall k\in K\\
\mu_j' = \mu_j=0 &\forall j \in M\\
\mu_k' = \mu_k + \frac{\epsilon}{N-|K|} &\forall k\in K 
\end{cases}
\end{equation*}
where $\vec{p}$ and $\vec{\mu}$ are the prices and Lagrange multipliers for $\vec{q}$.  Hence, by the definition of derivatives, we have
\[
\frac{\partial p_i}{\partial q_i}=\frac{1}{2b} -\frac{1}{2b(N-|K|)} \quad \forall i \in M ~,
\]
\[
\frac{\partial p_j}{\partial q_i} = -\frac{1}{2b(N-|K|)} \quad \forall i, j \in M, i\neq j ~,
\]
\[
\frac{\partial p_k}{\partial q_i}= 0 \quad  \forall k \in K, i\in M ~.
\]

\item $L \neq \emptyset$ and $i \in M$

Consider changing $\vec{q}$ to $\vec{q}\,'$, where $q_i' = q_i + \epsilon$ and $q_j' = q_j~\forall j\neq i$. The new prices are defined by
\begin{equation*}
\begin{cases}
p_i' = p_i + (\frac{1}{2b} -\frac{1}{2b(N-|K|-|L|)})\epsilon\\
p_j' = p_j -\frac{\epsilon}{2b(N-|K|-|L|)}&\forall j \in M, j\neq i\\ 
p_k' = p_k =0 &\forall k\in K\\
p_l' = p_l =0 &\forall l\in L\\
\mu_j' = \mu_j=0 &\forall j \in M\\
\mu_k' = \mu_k + \frac{\epsilon}{N-|K|-|L|} &\forall k\in K \\
\mu_l' = \mu_l + \frac{\epsilon}{N-|K|-|L|} &\forall l \in L
\end{cases}
\end{equation*}
if $\epsilon >0$, and
\begin{equation*}
\begin{cases}
p_i' = p_i + (\frac{1}{2b} -\frac{1}{2b(N-|K|)})\epsilon\\

p_j' = p_j -\frac{\epsilon}{2b(N-|K|)}&\forall j \in M, j\neq i\\ 
p_k' = p_k =0 &\forall k\in K\\
p_l' = p_l - \frac{\epsilon}{2b(N-|K|)}&\forall l\in L\\
\mu_j' = \mu_j=0 &\forall j \in M\\
\mu_k' = \mu_k + \frac{\epsilon}{N-|K|} &\forall k\in K \\
\mu_l' = \mu_l =0 &\forall l \in L\\
\end{cases}
\end{equation*}
if $\epsilon <0$. 

From the above, we can see that the prices of all outcomes are continuous while we changing $q_i$ in its $\epsilon$-neighborhood. However, $\partial p_i/\partial q_i$, $\partial p_j/\partial q_i$, and $\partial p_l/\partial q_i$ are not defined at $\vec{q}$ for all $i, j \in M$, $i\neq j$, and $l\in L$, because the left and right derivatives do not equal. We only have  $\partial p_k/\partial q_i=0$ for all $k\in K$ and $i \in M$.  

\item $L \neq \emptyset$, $l \in L$

When changing $\vec{q}$ to $\vec{q}\,'$, where $q_l' = q_l + \epsilon$ and $q_j' = q_j~\forall j\neq l$, similar to the above case, the new optimal solution is different for $\epsilon >0$ and $\epsilon < 0$. In particular, when $\epsilon>0$,
\[
p_l' = p_l +(\frac{1}{2b} -\frac{1}{2b(N-|K|)})\epsilon , 
\]
\[
p_j' = p_j -\frac{\epsilon}{2b(N-|K|)}~\forall j \in M\cup L \textrm{ such that } j\neq l, 
\]
and $p_k'=p_k=0~\forall k \in K$.  
When $\epsilon<0$, $p_j'=p_j~\forall j$, $\mu_l' = \mu_l - \epsilon$, and $\mu_j' = \mu_j~\forall j\neq k$.  

We can see that the prices of all outcomes are continuous while we changing $q_l$ in  its $\epsilon$-neighborhood. But $\partial p_l/\partial q_l$ and $\partial p_j/\partial q_l$ are not defined at $\vec{q}$ for all $j \in M\cup L$, $j\neq l$, and $l\in L$, because the left and right derivatives do not equal. $\partial p_k/\partial q_l=0$ for all $k\in K$ and $l \in L$. 
 
\item $k \in K$

Consider changing $\vec{q}$ to $\vec{q}\,'$, where $q_k' = q_k + \epsilon$ and $q_j' = q_j~\forall j\neq k$, the new prices are defined by $p_j' = p_j~\forall j$, $\mu_k' = \mu_k - \epsilon$, and $\mu_j' = \mu_j~\forall j\neq k$. Thus, we have $\partial p_k/\partial q_k=0~\forall k \in K$, $\partial p_j/\partial q_k=0~\forall k, j \in K$, and $\partial p_j/\partial q_k=0~\forall j\in M, k\in K$. 
\squishend

The above shows that the price functions are continuous everywhere, but not differentiable everywhere. In particular, when $L$ is not empty (i.e., given $\vec{q}$ there exists some outcome $i$ such that both $p_i(\vec{q})$ and $\mu_i(\vec{q})$ are zero), some of the partial derivatives are not defined at $\vec{q}$. This corresponds to the second and third cases shown above. We further note that in these two cases, for any $i \in M$ and $l \in L$, any change in $q_i$ or $q_l$ will cause the set of $L$ to become empty, because either $p_l$ or $\mu_l$ will become positive for all $l\in L$. This means that the prices are differentiable almost everywhere, with the only exceptions at finite number of points when $L \neq \emptyset$.   

Because $0\leq |K|\leq N-1$, $0\leq D_{i,i}(t) \leq 1/(2b) - 1/(2bN)$ and $-1/(2b) \leq D_{i,j}(t) \leq 0$. We have 
\[
\sum_{i=1}^N \sum_{j=1}^N |D_{i,j}(t)| \leq N\left(\frac{1}{2b}-\frac{1}{2bN}\right)+N(N-1)\frac{1}{2b}=\frac{N^2-1}{2b}.
\]
\qed

\end{document}